\documentclass{article}

\usepackage{microtype}
\usepackage{graphicx}
\usepackage{subcaption}
\usepackage{booktabs} 
\usepackage{placeins}

\usepackage{hyperref}

\usepackage[preprint]{icml2026}

\usepackage{amsmath,amsfonts,bm}
\usepackage{mathtools,amssymb,amsthm}
\usepackage{algorithm}
\usepackage{algorithmic}

\def\eqref#1{equation~\ref{#1}}

\def\1{\bm{1}}

\def\rva{{\mathbf{a}}}

\def\rve{{\mathbf{e}}}

\def\rvk{{\mathbf{k}}}

\def\rvm{{\mathbf{m}}}

\def\rvo{{\mathbf{o}}}

\def\rvq{{\mathbf{q}}}

\def\rvs{{\mathbf{s}}}

\def\rvv{{\mathbf{v}}}

\def\rvz{{\mathbf{z}}}

\DeclareMathAlphabet{\mathsfit}{\encodingdefault}{\sfdefault}{m}{sl}
\SetMathAlphabet{\mathsfit}{bold}{\encodingdefault}{\sfdefault}{bx}{n}

\newcommand{\R}{\mathbb{R}}

\newcommand{\softmax}{\mathrm{softmax}}

\renewcommand{\R}{\mathbb{R}}

\renewcommand{\R}{\mathbb{R}}

\newcommand{\inangle}[1]{\left\langle#1\right\rangle}

\newcommand{\sai}[1]{}

\usepackage{siunitx}  
\usepackage{adjustbox}
\usepackage{multirow}
\usepackage[utf8]{inputenc} 
\usepackage[T1]{fontenc}    
\usepackage{url}            
\usepackage{nicefrac}       
\usepackage{xcolor}         
\usepackage{tabularx} 
\usepackage{ragged2e} 
\usepackage{array}
\newcolumntype{M}[1]{>{\centering\arraybackslash}m{#1}}
\usepackage{makecell}

\usepackage[capitalize,noabbrev]{cleveref}

\usepackage{verbatim}

\newtheorem{theorem}{Theorem}

\usepackage[most]{tcolorbox}
\definecolor{TakeawayBack}{HTML}{EEF5FF}  
\definecolor{TakeawayFrame}{HTML}{7BA3D7} 
\newtcolorbox{takeaway}[1][]{
  enhanced, breakable,
  colback=TakeawayBack,
  colframe=TakeawayFrame,
  boxrule=0.5pt, arc=2pt,
  left=6pt, right=6pt, top=4pt, bottom=4pt,
  before skip=0.6\baselineskip, after skip=0.6\baselineskip,
  coltitle=black, fonttitle=\bfseries,
  colbacktitle=TakeawayBack,
  attach title to upper,
  after title={\quad},
  title={#1}
}

\icmltitlerunning{LUCID: Attention with Preconditioned Representations}

\begin{document}

\twocolumn[
  \icmltitle{LUCID: Attention with Preconditioned Representations}

  \icmlsetsymbol{equal}{*}
  \icmlsetsymbol{intern}{$\dagger$}

  \begin{icmlauthorlist}
    \icmlauthor{Sai Surya Duvvuri}{equal,intern,inst1}
    \icmlauthor{Nirmal Patel}{equal,inst1}
    \icmlauthor{Nilesh Gupta}{intern,inst1}
    \icmlauthor{Inderjit S. Dhillon}{inst2}
  \end{icmlauthorlist}

  \icmlaffiliation{inst1}{The University of Texas at Austin}
  \icmlaffiliation{inst2}{Google}

  \icmlcorrespondingauthor{Sai Surya Duvvuri}{saisurya@cs.utexas.edu}

  \icmlkeywords{Attention Mechanism, Transformers, Long-Context, RKHS, Machine Learning}

  \vskip 0.3in
]

\printAffiliationsAndNotice{\icmlEqualContribution $\dagger$Work done during internship at Google.}

\begin{abstract}
Softmax-based dot-product attention is a cornerstone of Transformer architectures, enabling remarkable capabilities such as in-context learning. However, as context lengths increase, a fundamental limitation of the softmax function emerges: it tends to diffuse probability mass to irrelevant tokens degrading performance in long-sequence scenarios. Furthermore, attempts to sharpen focus by lowering softmax temperature hinder learnability due to vanishing gradients. We introduce LUCID Attention, an architectural modification that applies a preconditioner to the attention probabilities. This preconditioner, derived from exponentiated key-key similarities, minimizes overlap between the keys in a Reproducing Kernel Hilbert Space, thus allowing the query to focus on important keys among large number of keys accurately with same computational complexity as standard attention. Additionally, LUCID's preconditioning-based approach to retrieval bypasses the need for low temperature and the learnability problems associated with it. We validate our approach by training $\sim$1 billion parameter language models evaluated on up to 128K tokens. Our results demonstrate significant gains on long-context retrieval tasks, specifically retrieval tasks from BABILong, RULER, SCROLLS and LongBench. For instance, LUCID achieves up to 18\% improvement in BABILong and 14\% improvement in RULER multi-needle performance compared to standard attention.

\end{abstract}

\section{Introduction}
Transformers, underpinned by the softmax dot-product attention mechanism, have become the cornerstone of modern machine learning, particularly in the domain of large language models (LLMs) \citep{vaswani2017attention}. This mechanism, operating on queries (Q), keys (K), and values (V), allows models to dynamically weigh the importance of different parts of the input sequence. The exponential function within the softmax operation is crucial, enabling a sharp focus on relevant tokens, which forms the basis for remarkable capabilities such as in-context learning \citep{brown2020language} and retrieval.

However, as the demand grows for LLMs to process increasingly longer sequences for tasks involving complex reasoning or extended documents, limitations of the standard attention mechanism become more apparent. Although softmax encourages focus, it often assigns non-negligible weight to irrelevant tokens, leading to diluted attention distributions. This phenomenon, often referred to as ``attentional noise,'' can reduce precision and degrade performance on long-context tasks~\citep{ye2025differentialtransformer}, and can hinder attention to scale gracefully with sequence length. Additionally, softmax's sensitivity to temperature can affect learnability of the model, high temperature can lead to representation collapse \citep{masarczyk2025unpacking} and low temperature can lead to saturation and training instabilities \cite{zhai2023stabilizing}.



Inspired by these developments, we ask whether such correction strategies can benefit { standard softmax attention} by targeting the root cause of the attention noise issue - correlated keys. Viewing softmax attention through the lens of kernel methods \citep{katharopoulos2020transformers}, where attention probability is proportional to an inner product in a Reproducing Kernel Hilbert Space (RKHS) $\exp(\langle \rvq, \rvk \rangle) = \inangle{\phi(\rvq),\phi(\rvk)}$. The crucial insight at the core of our paper is that we decorrelate the keys in RKHS feature space by preconditioning $\phi(\rvk)$. This allows the queries in the feature space $\phi(\rvq)$ to focus on selected keys among large number of keys with minimal noise, improving the retrieval performance. Critically, LUCID bypasses the vanishing gradients associated with lowering softmax entropy to sharpen attention. By decorrelating keys in the RKHS, LUCID achieves precise retrieval while operating at standard entropy levels, preserving gradient flow. Theoretical proofs and synthetic experiments confirm that this approach effectively reconciles retrieval precision with optimization stability.

\begin{figure*}[h]
    \centering
    \includegraphics[width=\textwidth]{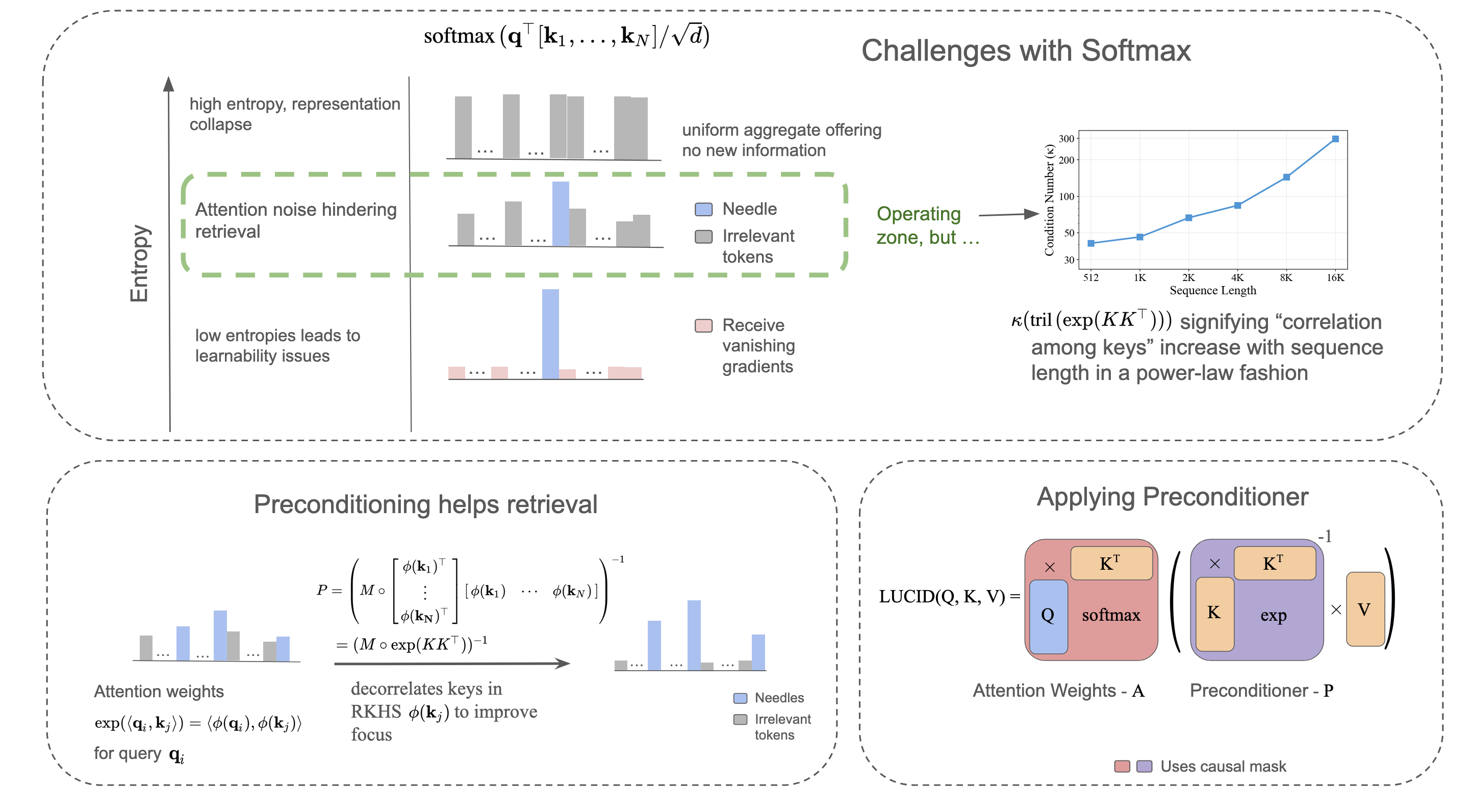}
    \caption{\textbf{Top:} Challenges with softmax attention. The attention entropy must lie in a narrow operating zone---high entropy leads to uniform aggregation and representation collapse, while low entropy causes vanishing gradients. Even within this zone, correlated keys create attention noise that hinders retrieval of relevant tokens (needles) from irrelevant context. The condition number $\kappa(\text{tril}(\exp(KK^\top)))$ grows with sequence length, indicating increasing key correlation. \textbf{Bottom:} LUCID addresses this by constructing a preconditioner $P = (M \circ \exp(KK^\top))^{-1}$ that decorrelates keys in RKHS, sharpening attention on relevant tokens. The resulting attention mechanism (right) combines standard attention weights with the preconditioner using causal masking. The $P^{-1}V$ computation is performed efficiently via \texttt{torch.linalg.solve\_triangular} (cuBLAS TRSM kernel), exploiting the lower-triangular structure of $P$..}
    \label{fig:lucid_overview}
\end{figure*}
\vspace{8em}
\textbf{Contributions}:
\begin{enumerate}
\item \textbf{Attention Noise.} We propose a novel preconditioning strategy that decorrelates keys in a Reproducing Kernel Hilbert Space (RKHS), which sharpens attention distributions and reduces noise from irrelevant tokens in long-context scenarios.

\item \textbf{Learnability.} We demonstrate that our approach improves the conditioning of the softmax function, addressing critical learnability issues with softmax - vanishing gradient.

\item \textbf{Long-Context Results.} Our method achieves superior performance on needle in a haystack benchmarks, outperforming strong baselines such as Path Attention \citep{yang2025pathattentionpositionencoding}, DeltaNet \citep{yang2024parallelizing} and Differential Transformer \citep{ye2025differentialtransformer}.

\end{enumerate}

\section{LUCID Attention}

We derive LUCID Attention by viewing attention mechanisms through the lens of gradient descent in a Reproducing Kernel Hilbert Space (RKHS). This perspective reveals why standard attention accumulates interference and how LUCID's preconditioner naturally corrects this limitation. We then describe connections to prior work in the literature.

\subsection{Deriving LUCID Attention from Gradient Descent in RKHS}
\label{sec:gradient-descent-rkhs}

\subsubsection{Preliminaries and Notation}

Let $ Q, K, V \in \mathbb{R}^{N \times d} $ denote the query, key, and value matrices, where $ N $ is the sequence length and $ d $ is the head dimension. Each row $ \rvq_i $, $ \rvk_j $, and $ \rvv_j $ corresponds to the $ i $-th or $ j $-th query, key, or value vector, respectively. Let the causal mask $M\in \{0,1\}^{N\times N}$, where ${M}_{ij} = 1$ if $i \geq j$ and ${M}_{ij} = 0$ otherwise. We also define $ \hat{M} \in \{0,-\infty\}^{N \times N} $, where $ \hat{M}_{ij} = 0 $ if $ i \geq j $ and $ \hat{M}_{ij} = -\infty $ otherwise, so that ${M} = \exp(\hat{M})$.

Following \citet{katharopoulos2020transformers}, the exponential inner product in softmax attention can be expressed using a kernel function:
\begin{align*}
    \exp\left( \inangle{\rvq_i, \rvk_j} \right) = \inangle{ \phi(\rvq_i), \phi(\rvk_j) },
\end{align*}
where $ \phi: \mathbb{R}^{d} \rightarrow \mathcal{H} $ is a feature map to a Reproducing Kernel Hilbert Space (RKHS).

\subsubsection{Standard Attention as Gradient Descent with a Linear Objective}

Consider the causal attention mechanism where at each step $t$, we maintain a state matrix in its linear operator form $S: \mathcal{H} \to \mathbb{R}^d$ that stores key-value associations. We can view the unnormalized attention update (without the softmax denominator) as gradient descent on a \textit{linear objective}:
\begin{align}
\label{eqn:linear-objective}
    f_t(S) = -\rvv_t^\top S \phi(\rvk_t) = -\langle S\phi(\rvk_t), \rvv_t \rangle.
\end{align}
The gradient of this objective is $\nabla_S f_t(S) = -\rvv_t \phi(\rvk_t)^\top$, and applying gradient descent yields the update rule:
\begin{align*}
    S_t = S_{t-1} - \nabla_S f_t(S_{t-1}) = S_{t-1} + \rvv_t \phi(\rvk_t)^\top.
\end{align*}
This is precisely the additive update underlying standard linear attention. After $t$ steps, the retrieval for query $\rvq_t$ becomes $S_t \phi(\rvq_t) = \sum_{i=1}^{t} \rvv_i \langle \phi(\rvk_i), \phi(\rvq_t) \rangle$.

\paragraph{Limitations of the Linear Objective.}
While simple, the linear objective has fundamental drawbacks: \textbf{Unbounded from below}: The objective $f_t(S) \to -\infty$ as $\|S\phi(\rvk_t)\| \to \infty$ in the direction of $\rvv_t$, providing no natural stopping criterion. \textbf{State-independent updates}: The gradient $-\rvv_t\phi(\rvk_t)^\top$ is independent of the current state $S_{t-1}$, meaning updates occur regardless of whether the association $\rvk_t \mapsto \rvv_t$ is already correctly stored. \textbf{Accumulating interference}: Each update adds to the state without removing old information, leading to interference when keys are correlated: $S_t\phi(\rvk_t) = \rvv_t + \sum_{i < t} \rvv_i \langle \phi(\rvk_i), \phi(\rvk_t) \rangle$.

\subsubsection{Deriving LUCID via a Quadratic Objective}

A more principled approach uses a \textit{quadratic objective} that directly measures retrieval error:
\begin{align}
\label{eqn:quadratic-objective}
    f_t(S) = \frac{1}{2}\|S\phi(\rvk_t) - \rvv_t\|^2.
\end{align}
This objective is bounded below by 0, with a clear minimum at $S\phi(\rvk_t) = \rvv_t$. The gradient is:
\begin{align*}
    \nabla_S f_t(S) = (S\phi(\rvk_t) - \rvv_t)\phi(\rvk_t)^\top,
\end{align*}
and applying gradient descent with step size $\beta_t$ gives:
\begin{align}
\label{eqn:quadratic-update}
    S_t = S_{t-1} - \beta_t(S_{t-1}\phi(\rvk_t) - \rvv_t)\phi(\rvk_t)^\top.
\end{align}

\paragraph{Recurrent Form and Memory Erasure.}
Setting $\beta_t = 1$, we can rewrite (\ref{eqn:quadratic-update}) as:
\begin{align}
\label{eqn:delta-rule-rkhs}
    S_t = S_{t-1}(I - \phi(\rvk_t)\phi(\rvk_t)^\top) + \rvv_t\phi(\rvk_t)^\top.
\end{align}
This is the \textit{delta rule} in the RKHS, which can be interpreted as an erase-then-write mechanism:
\begin{align*}
    S_t = S_{t-1} - \underbrace{(S_{t-1}\phi(\rvk_t))\phi(\rvk_t)^\top}_{\text{erase old association}} + \underbrace{\rvv_t\phi(\rvk_t)^\top}_{\text{write new association}}.
\end{align*}
This is precisely the update rule underlying DeltaNet \citep{schlag2021linear, yang2024parallelizing}, but generalized from finite-dimensional representation space to the infinite-dimensional RKHS induced by the exponential kernel. In other words, LUCID can be viewed as DeltaNet operating in RKHS. 

\paragraph{Self-Regulation Property.}
A key advantage of the quadratic objective is \textit{self-regulation}: when the current prediction is already correct, i.e., $S_{t-1}\phi(\rvk_t) = \rvv_t$, the gradient $(S_{t-1}\phi(\rvk_t) - \rvv_t)\phi(\rvk_t)^\top = 0$ and no update occurs. This property is absent in the linear objective, which blindly updates regardless of prediction quality.

\paragraph{Parallel Form and the Preconditioner.}
The recurrent form (\ref{eqn:delta-rule-rkhs}) can be computed in parallel. Collecting outputs for a all the tokens: \sai{mention dimensions of K and V}
\begin{align*}
    O = \left( M \circ \phi(Q)\phi(K)^\top \right) \left( I + \text{stril}(\phi(K)\phi(K)^\top) \right)^{-1} V,
\end{align*}
where $\text{stril}(\cdot)$ denotes the strictly lower triangular part. For the exponential kernel RKHS used in softmax attention, this becomes:
\begin{align*}
    O = \left( M \circ \exp(QK^\top) \right) \left( M \circ \exp(KK^\top) \right)^{-1} V.
\end{align*}

\paragraph{The LUCID Formulation.}
To maintain numerical stability and proper variance scaling in practice, we introduce the standard $1/\sqrt{d}$ logit scaling and RMS normalization for the key vectors inside the preconditioner:
\begin{align*}
    \rvk_{i, \text{RN}} \leftarrow \sqrt{d} \cdot {\rvk_i}/{\lVert \rvk_i \rVert_2}.
\end{align*}
This normalization ensures that the inner products $ \rvk_{i, \text{RN}}^\top \rvk_{j, \text{RN}} $ have consistent distributional scale across tokens and the preconditioner matrix is unit-diagonal with controlled off-diagonal magnitudes, yielding better condition numbers. Putting all components together, LUCID Attention is:
\sai{mentino about softmax normalization}
\begin{align}
\label{eqn:lucid}
    \text{LUCID}(Q, K, V) &= \text{softmax} \left( {\frac{QK^\top}{\sqrt{d}} + \hat{M}} \right) \nonumber \\
    &\cdot \left( {M} \circ \exp \left( \frac{K_\text{RN}K^\top_\text{RN}}{\sqrt{d}} - \sqrt{d} \right) \right)^{\!\!-1} \!\! V.
\end{align}


\paragraph{Why LUCID reduces attention noise in longer-contexts?}
\sai{mention that condition number increases as sequence length increases, showing the need to precondition more for longer sequences}
In the exponential kernel RKHS, a crucial property holds: $\langle \phi(\rvk_i), \phi(\rvk_j) \rangle = \exp(\rvk_i^\top \rvk_j) > 0$ for all $i, j$. This means keys are \textit{never orthogonal} in feature space, and the preconditioner $(M \circ \exp(KK^\top))^{-1}$ is always non-trivial. The condition number $\kappa$ of this preconditioner quantifies the correction LUCID provides: when $\kappa \approx 1$, keys are approximately orthogonal and linear/quadratic objectives behave similarly; when $\kappa \gg 1$, keys are highly correlated and LUCID's correction is essential.

We analyze condition numbers on $\sim$1B parameter language models (hidden size 2048, 24 layers, 32 heads) during continual pretraining on the Dolma dataset \citep{dolma}, fine-tuned from 2K to 65K sequence length. As shown in \cref{fig:condition_numbers}, the condition number grows with sequence length, empirically validating that LUCID's advantage increases for longer sequences where key correlations accumulate.

\begin{figure}[h]
    \centering
    \includegraphics[width=0.85\columnwidth]{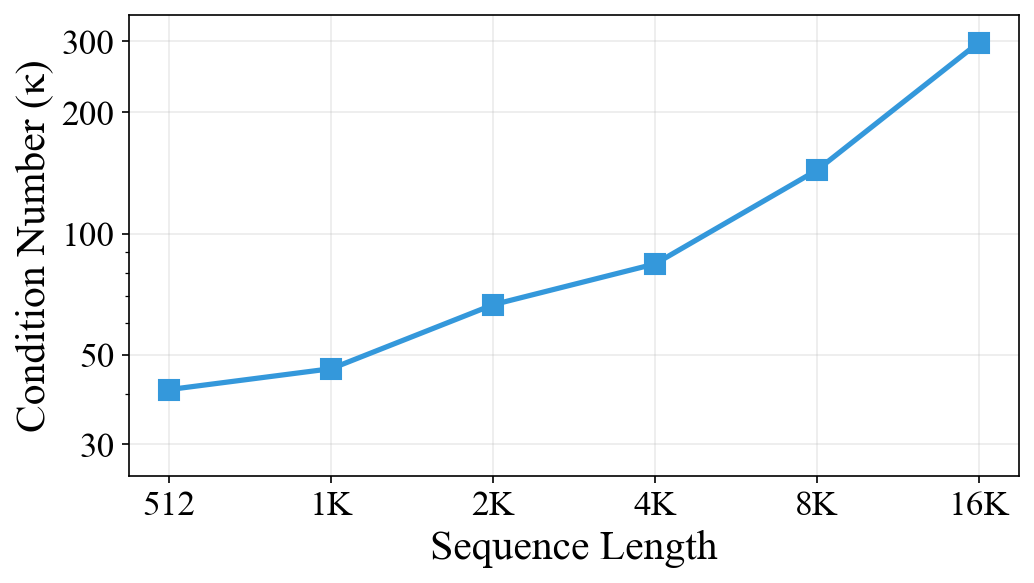}
    \caption{Condition number $\kappa$ of the LUCID preconditioner matrix grows with sequence length. Higher $\kappa$ indicates stronger key correlations, where LUCID's correction becomes more essential.}
    \label{fig:condition_numbers}
\end{figure}

As sequence length increases, the condition number grows due to accumulating key correlations. The cuBLAS TRSM kernel is optimized to handle fp32 outputs, which can accommodate these condition numbers safely. We have successfully trained and evaluated models up to 128K sequence length without stability issues. The RMS normalization of keys ensures the preconditioner matrix has unit diagonal entries and controlled off-diagonal magnitudes, contributing to this stability.


\begin{takeaway}[ From Linear to Quadratic Objectives]
LUCID arises from optimizing a quadratic objective (retrieval error) instead of the linear objective underlying standard attention. This yields a self-regulating ``erase-then-write'' update---the delta rule in RKHS---which is precisely DeltaNet generalized to infinite dimensions.
\end{takeaway}

\subsection{Learnability of LUCID Attention}
\label{sec:learnability}

A critical requirement for effective attention mechanisms is the ability to represent sharp distributions---when a query $\rvq_i$ matches a specific key $\rvk_j$, the attention should concentrate its weight on the corresponding value $\rvv_j$. However, achieving this sharpness in standard softmax attention creates a fundamental tension with learnability.

\paragraph{The Softmax Gradient Problem.}
Consider a query $\rvq$ and the softmax attention distribution $\rva = \text{softmax}(\rvq K^\top / \sqrt{d})$. To achieve sharp retrieval, one might decrease the softmax temperature, leading to:
\begin{align*}
    \text{softmax}(\rvz / \tau) \xrightarrow{\tau \to 0} \rve_{\arg\max \rvz},
\end{align*}
where $\rve_i$ denotes the canonical basis vector with 1 at position $i$ and 0 elsewhere. While this yields a one-hot attention distribution, it introduces a critical problem: the Jacobian of the softmax vanishes. Specifically, the softmax Jacobian is:
\begin{align*}
    J = \frac{\partial \rva}{\partial \tilde{\rva}} = \text{diag}(\rva) - \rva \rva^\top.
\end{align*}
When $\rva = \rve_i$ (a one-hot vector), we have:
\begin{align*}
    J = \text{diag}(\rve_i) - \rve_i \rve_i^\top = 0.
\end{align*}
This zero Jacobian means gradients cannot propagate through the attention mechanism, making learning ineffective. \sai{cite about the low temperature affecting stability paper} This creates a fundamental dilemma: standard attention can either retrieve precisely (low temperature, no gradients) or learn effectively (higher temperature, blurred retrieval).

\paragraph{LUCID Decouples Retrieval Sharpness from Temperature.}
LUCID resolves this tension by achieving sharp retrieval through the preconditioner $(M \circ \exp(KK^\top))^{-1}$ rather than by lowering the softmax temperature. The softmax operates at standard temperature with a well-conditioned gradient, while the preconditioner sharpens the final output through deconvolution. This fundamental decoupling enables LUCID to achieve both precise retrieval and effective learning simultaneously.

\begin{theorem}[Gradient Preservation in LUCID]
\label{thm:nonvanishing}
Let $\rvo$ be the LUCID attention output (before multiplying by $V$):
\begin{align*}
    \rvo = \text{softmax}\!\left(\frac{\rvq K^\top}{\sqrt{d}}\right) \left( M \circ \exp\!\left( \frac{K_\text{RN}K_\text{RN}^\top}{\sqrt{d}} - \sqrt{d} \right) \right)^{\!-1}.
\end{align*}
Assume $K \neq 0$ and at least one column of $\text{diag}(\rva) - \rva\rva^\top$ is not in the null-space of $K^\top$, where $\rva = \text{softmax}(\rvq K^\top / \sqrt{d})$. Then ${\partial \rvo}/{\partial \rvq} \neq 0$.
\end{theorem}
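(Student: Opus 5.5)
Write $P = M \circ \exp\!\left( K_\text{RN}K_\text{RN}^\top/\sqrt{d} - \sqrt{d} \right)$, so that $\rvo = \rva P^{-1}$ with $\rva = \softmax(\rvq K^\top/\sqrt{d})$ a row vector. The plan is a chain-rule computation followed by an injectivity argument.

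\textbf{Step 1: $P$ does not depend on $\rvq$ and is invertible.} $P$ depends only on $K$. Because $K_\text{RN}$ is row-normalized, each diagonal entry is $\exp(\|\rvk_{i,\text{RN}}\|^2/\sqrt{d} - \sqrt{d}) = \exp(d/\sqrt{d}-\sqrt{d}) = 1$. Since $M$ is lower triangular, $P$ is unit lower triangular. Hence $\det P = 1$, $P^{-1}$ exists, and $P^{-1}$ is a fixed nonsingular matrix with respect to $\rvq$. The assumption $K\neq 0$ is what makes the normalization $\rvk_i/\|\rvk_i\|$ meaningful; at minimum it requires the rows entering $P$ to be nonzero.

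\textbf{Step 2: apply the chain rule.} Let $\tilde\rva = \rvq K^\top/\sqrt{d}$. Then $\partial \tilde\rva/\partial \rvq = K^\top/\sqrt{d}$, understood as a $d\times N$ map acting on row vectors, or $K/\sqrt{d}$ in column convention. The softmax Jacobian is $J = \text{diag}(\rva) - \rva^\top\rva$, which is symmetric. Therefore, in column-vector convention,
\begin{align*}
\frac{\partial \rvo}{\partial \rvq} = \frac{1}{\sqrt{d}}\, P^{-\top} J K .
\end{align*}
The transpose is $\frac{1}{\sqrt{d}} K^\top J P^{-1}$ in row convention.

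\textbf{Step 3: conclude via injectivity.} Since $P^{-\top}$ is invertible, left-multiplying by it preserves nonzero-ness. So it suffices to show $JK\neq 0$, or equivalently $K^\top J \neq 0$ after transposing, using the symmetry of $J$. The hypothesis gives a column $\mathbf{j}$ of $J$ with $K^\top \mathbf{j} \neq 0$. That vector is exactly the corresponding column of $K^\top J$, so $K^\top J\neq 0$. Multiplying by the invertible factor $P^{-1}$ and the nonzero scalar $1/\sqrt{d}$ then gives $\partial\rvo/\partial\rvq\neq 0$.

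\textbf{Main obstacle.} The only delicate point is keeping the Jacobian shapes and the row/column conventions consistent. The hypothesis is phrased in terms of columns of $J$ and the null-space of $K^\top$, so the order of factors must be arranged so that it applies directly; this is what the symmetry of $J$ is used for. The invertibility of $P$ via its unit diagonal and triangularity is the substantive structural fact, and it is immediate from RMS normalization.
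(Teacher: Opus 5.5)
Your proposal is correct and follows essentially the same route as the paper's proof. The chain rule gives the Jacobian $\frac{1}{\sqrt{d}}K^\top(\text{diag}(\rva)-\rva\rva^\top)P^{-1}$; the masked preconditioner $P$ is lower triangular with nonzero (in fact unit, by the RMS normalization) diagonal, so it is invertible; and the column hypothesis gives $K^\top(\text{diag}(\rva)-\rva\rva^\top)\neq 0$. Your version is slightly cleaner than the paper's, because the column hypothesis alone already rules out the one-hot and $K=0$ cases, which the paper treats separately.
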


\begin{proof}
The Jacobian of LUCID with respect to $\rvq$ is:
\begin{align*}
    \frac{\partial \rvo}{\partial \rvq} &= \frac{K^\top}{\sqrt{d}} \left( \text{diag}(\rva) - \rva\rva^\top \right) \\
    &\quad \cdot \left( M \circ \exp \left( \frac{K_\text{RN}K_\text{RN}^\top}{\sqrt{d}} - \sqrt{d} \right) \right)^{\!-1}.
\end{align*}
Since the preconditioner is a masked lower-triangular matrix with positive diagonal entries, it is invertible with trivial null-space. The gradient vanishes only if $K^\top(\text{diag}(\rva) - \rva\rva^\top) = 0$. This occurs when: (1) $\rva$ is one-hot, (2) $K = 0$, or (3) all columns of $\text{diag}(\rva) - \rva\rva^\top$ lie in the null-space of $K^\top$. Since LUCID doesn't operate at extreme temperatures, $\rva$ is not one-hot. By assumption, $K \neq 0$ and the column condition holds. Therefore, $\partial \rvo / \partial \rvq \neq 0$.
\end{proof}

In summary, LUCID achieves the best of both worlds: precise retrieval similar to zero-temperature softmax (via the preconditioner) while maintaining non-zero gradients (via standard-temperature softmax). This decoupling is the key mechanism that enables LUCID to learn effectively while retrieving precisely.

\begin{takeaway}[ Decoupling Retrieval from Temperature]
Standard attention achieves sharp retrieval by lowering temperature, which reduces gradient signal. LUCID decouples these concerns: the preconditioner provides precision while standard temperature preserves gradient flow.
\end{takeaway}

\paragraph{Synthetic Experiment: Sequential Task Learning.}
To empirically validate the learnability advantage of LUCID, we design a two-phase synthetic experiment that isolates the tension between retrieval sharpness and gradient flow. We use a single-layer transformer with model dimension 256 and a single attention head, operating on sequences of length 10 drawn from a vocabulary of 10 digits (0--9).

\textbf{Phase 1 (Self-Retrieval):} The model learns to copy the input sequence, i.e., given input $X = (x_1, \ldots, x_{10})$, produce output $Y$ where $y_i = x_i$. This task requires the attention mechanism to learn a sharp, identity-like distribution where each position attends primarily to itself.

\textbf{Phase 2 (Cumulative Averaging):} Without resetting weights, the task switches to computing cumulative averages: $y_i = \frac{1}{i}\sum_{j=1}^{i} x_j$. This task requires attending to \emph{all} previous positions with appropriate weights, demanding that the attention mechanism adapt from sparse to dense distributions.

\cref{fig:synthetic_learnability} shows the training dynamics. During Phase 1, both standard softmax and LUCID achieve near-zero loss, successfully learning the self-retrieval task. However, the mechanisms by which they achieve this differ fundamentally. The right panel reveals that standard softmax progressively reduces the off-diagonal entries of its Jacobian $(\text{diag}(\rva) - \rva\rva^\top)$ by approximately three orders of magnitude---effectively lowering its implicit temperature to produce sharp attention distributions. LUCID, in contrast, maintains substantially higher Jacobian magnitudes throughout Phase 1, achieving sharpness through its preconditioner rather than temperature reduction.

The consequences become apparent in Phase 2. When the task switches to cumulative averaging, standard softmax struggles to adapt: its near-zero Jacobian prevents gradients from flowing, and the loss remains high. LUCID, having preserved gradient flow during Phase 1, rapidly adapts to the new task, achieving low loss within a few thousand steps. Notably, LUCID's Jacobian magnitude \emph{increases} at the phase transition, demonstrating its ability to dynamically adjust attention patterns when learning requires it.

\begin{figure}[t]
    \centering
    \includegraphics[width=\columnwidth]{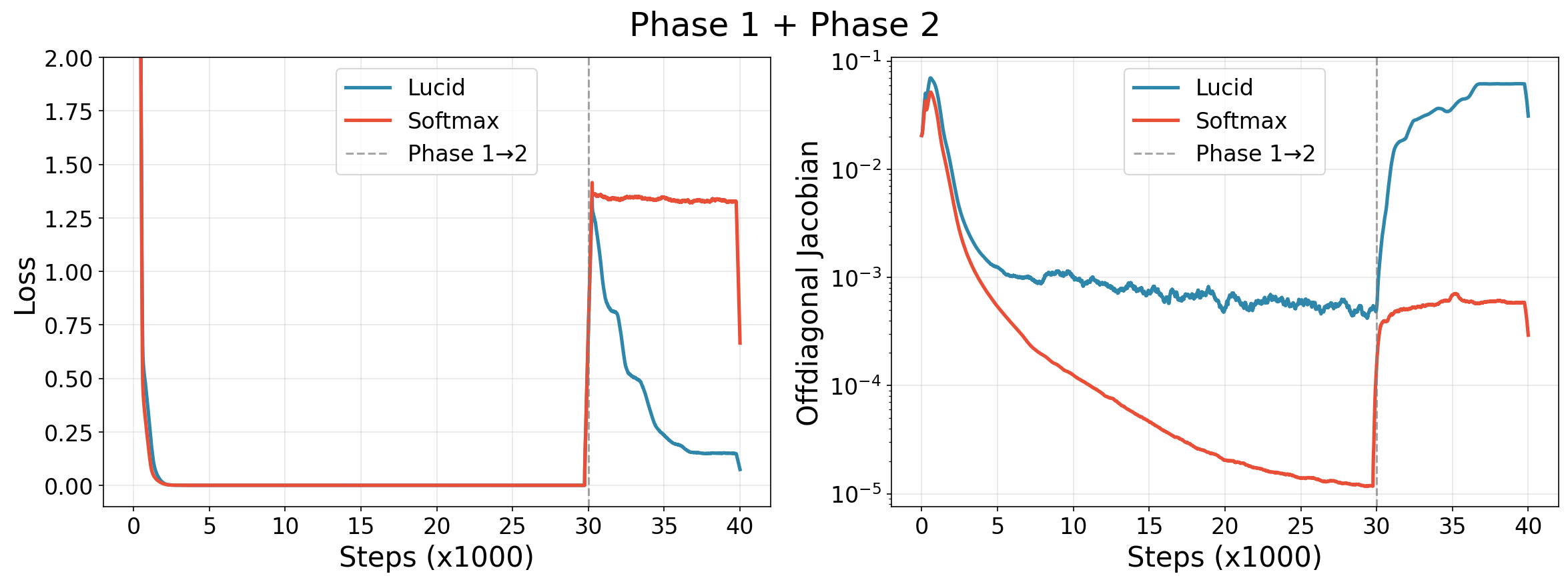}
    \caption{\textbf{Sequential task learning reveals the learnability-retrieval tradeoff.} \textbf{Left:} Training loss across two phases. Both methods solve Phase 1 (self-retrieval), but only LUCID adapts to Phase 2 (cumulative averaging). \textbf{Right:} Off-diagonal Jacobian magnitude (log scale). Standard softmax reduces its Jacobian by ${\sim}10^3\times$ during Phase 1 to achieve sharpness, blocking gradient flow in Phase 2. LUCID maintains higher Jacobian values throughout, enabling rapid adaptation.}
    \label{fig:synthetic_learnability}
\end{figure}
\subsection{LUCID's Efficiency}

Implementing LUCID Attention requires solving a triangular linear system involving the masked kernel matrix. Specifically, we solve for $ Y \in \mathbb{R}^{N \times d} $ in:
\begin{align*}
    \left( {M} \circ \exp \left( \frac{K_\text{RN}K_\text{RN}^\top}{\sqrt{d}} - \sqrt{d} \right) \right) Y = V,
\end{align*}
which can be done efficiently via forward substitution, since the matrix is lower triangular due to the causal mask $ M $. Once $ Y $ is obtained, the final output is computed by multiplying the softmax attention weights:
\begin{align*}
    \text{LUCID}(Q, K, V) = \left(  \text{softmax} \left( \frac{QK^\top}{\sqrt{d}} + \hat{M} \right) \right) Y.
\end{align*}
The overall computational complexity remains $ \mathcal{O}(N^2 d) $, similar to standard softmax attention.

\paragraph{Training Overhead.}
We leverage the highly optimized cuBLAS TRSM (Triangular Solve) kernel for solving the triangular system, which enables efficient GPU utilization. \cref{tab:training_overhead} reports the training time per iteration for LUCID compared to Standard Attention across different sequence lengths. LUCID adds approximately 0-5.5\% overhead during training.

\begin{table}[h]
    \centering
    \caption{LUCID training overhead across Gemma 3\citep{team2025gemma} and Qwen 2.5\citep{team2024qwen2}  remains modest. Modern trends toward grouped-query attention(GQA) with fewer KV heads further reduce LUCID’s overhead.}
    \label{tab:training_overhead}
    \begin{small}
    \begin{tabular}{lccc}
        \toprule
        \textbf{Architecture} & \textbf{Overhead} \\
        \midrule
        Gemma 3-1B    & $\sim 0\%$ \\
        Gemma 3-4B    & $+3.5\%$ \\
        Qwen 2.5-1.5B & $+5.5\%$ \\
        \bottomrule
    \end{tabular}
    \end{small}
\end{table}

\paragraph{Inference Latency.}
At inference time, LUCID's overhead becomes negligible. For a 32K context length with batch size 1 and 100 new tokens generated, LUCID achieves 77ms compared to 76ms for Standard Attention---an overhead of only $\sim$1.3\%. This minimal difference arises because the triangular solve operates on the KV cache and can be performed incrementally, as described in the Appendix. 

\paragraph{Compute Ablation.}
 
 A natural question is whether the 0-5.5\% training overhead of LUCID could be better utilized by simply training the baseline model for longer. To address this, we conducted an ablation study where we increased the training compute for Standard Attention by ${\sim}10\%$ additional steps during 64K sequence length continual pretraining, matching LUCID's total compute budget. training the baseline longer does not yield equivalent performance gains---LUCID significantly outperforms the compute-matched baseline in multi-key retrieval by $\sim$20\% in Appendix, \cref{tab:compute_ablation}.

\begin{takeaway}[ Architectural Gains in Multi-needle, Not Just Compute]
LUCID adds 0--5.5\% training overhead and ${\sim}$1.3\% inference overhead. Crucially, training the baseline 10\% longer does not match LUCID's performance---the gains in multi-needle task stem from architectural design, not extra compute.
\end{takeaway}

\section{Related Work}
\label{sec:related_works}

\paragraph{Standard Attention and Long Context Challenges.} The scaled dot-product attention mechanism, introduced by \citet{vaswani2017attention}, is the cornerstone of the Transformer architecture. Its ability to model long-range dependencies and its parallelizability have driven the success of modern LLMs. The core computation involves Queries - $Q$, Keys - $K$, and Values - $V$, where attention probabilites are derived from $QK^T$ passed through a softmax function.

However, standard attention suffers from $\mathcal{O}(N^2d)$ computational complexity and performance degradation on long sequences ($N$ large). This degradation manifests as ``attention noise'', where the softmax forces attention weights onto irrelevant tokens, obscuring the signal from relevant ones \citep{ye2025differentialtransformer}. This is particularly problematic for tasks requiring precise retrieval or reasoning over extended contexts. The Differential Transformer (Diff Transformer) \citep{ye2025differentialtransformer} attempts to cancels noise by computing the difference between two attention maps ($A_1 - \lambda A_2$), promoting sparsity. In contrast, LUCID attention tries to solve the root cause of the attention noise issue - correlated keys. In LUCID, a preconditioner developed from key-key similarities is used, which decorrelates the keys and removes attentional noise. This preconditioning sharpens the attention distribution and enable precise retrieval.


\paragraph{Linear Complexity Alternatives.} Another line of research seeks to overcome the $\mathcal{O}(N^2)$ bottleneck by developing linear complexity ($\mathcal{O}(N)$) attention mechanisms. These often replace softmax with simpler kernels and reformulate the computation as a recurrent process, eliminating the need for a large KV cache \citep{katharopoulos2020transformers, yang2024parallelizing}.  While efficient, early linear attention variants often underperformed softmax attention, particularly in recall. DeltaNet \citep{yang2024parallelizing} improves recall within this linear framework by replacing the simple additive memory update with one inspired by the delta rule, allowing for more targeted memory modification. GatedDeltaNet \citep{yang2024gated} further enhances this by combining the delta rule with a data-dependent gating mechanism (similar to Mamba2 \citep{dao2024transformers}) for adaptive memory erasure and targeted updates. LUCID Attention is distinct from these approaches as it retains the $\mathcal{O}(N^2d)$ complexity, focusing on improving the quality of standard attention mechanism rather than achieving efficiency. PaTH Attention \citep{yang2025pathattentionpositionencoding} introduces a data-dependent positional encoding mechanism via preconditioners, developed as a generatlization of the DeltaNet.


\begin{table}[t]
    \caption{Comparing mathematical formula of different attention mechanisms. Here, $Q, K, V \in \mathbb{R}^{N\times d}$ are queries, keys, and values, $\lambda\in \mathbb{R}, \boldsymbol{\beta}\in \mathbb{R}^N, W\in \mathbb{R}^{N\times d}$ are learnable parameters. $M$ and $\hat{M}$ are multiplicative and additive causal masks, respectively. $T = \text{diag}(\boldsymbol{\beta})^{-1} + \text{stril}(WW^\top)$. Let $\text{diag}$, $\text{tril}$, and $\text{stril}$ be diagonal, lower-triangular and strictly-lower-triangular retention operators, respectively.}
    \label{tab:model_formulae}
    \begin{center}
    \begin{small}
    \resizebox{\columnwidth}{!}{%
    \begin{tabular}{ll}
        \toprule
        \textbf{Model} & \textbf{Attention Formula}\\
        \midrule
        \textbf{Standard} & $\text{softmax}\left(QK^\top + \hat{M} \right)V$ \\
        \midrule
        \textbf{Diff Trans.} & $\left(\text{softmax}\left(Q_1K_1^\top + \hat{M} \right) - \lambda\ \text{softmax}\left(Q_2K_2^\top + \hat{M} \right)\right)V$ \\
        \midrule
        \textbf{DeltaNet} & $\left(M\circ QK^\top \right)\left(I + \text{diag}(\boldsymbol{\beta})\text{stril}(KK^\top) \right)^{-1}\text{diag}(\boldsymbol{\beta})V$ \\
        \midrule
        \textbf{PaTH} & $\text{softmax}\left(\text{tril}(QK^\top) - \text{tril}(QW^\top)T^{-1}\text{stril}(WK^\top) \right)V$ \\
        \midrule
        \textbf{LUCID} & $\text{softmax}\left(QK^{\top} + \hat{M} \right) \left( M\circ \exp\left( KK^\top \right) \right)^{-1}V$ \\
        \bottomrule
    \end{tabular}%
    }
    \end{small}
    \end{center}
    \vskip -0.1in
\end{table}

\section{Experimental Setup}

We compare LUCID Attention against different baselines: Standard Attention, Diff Transformer, DeltaNet, and PaTH Attention. We train, and fine-tune our models on a subset of Dolma dataset ($\sim$6.5B tokens) provided by Allen AI \citep{dolma} and use it for Needle-In-A-Haystack (NIAH) evaluations and downstream tasks.

\subsection{Model Configuration}

For NIAH evaluations and downstream tasks, we train decoder-only transformers with 22 layers, model dimension 2048, MLP dimension 5632, and vocabulary size 32,000. Each model uses 32 attention heads and 4 key-value heads with head dimension of 68. PaTH Rotary positional embeddings - RoPE \citep{su2024roformer} are used in all attention variants except for DeltaNet and PaTH. RoPE frequencey is set to 10,000 during pretraining and was changed to 64,000 during fine-tuning. We use the AdamW optimizer \citep{kingma2014adam} with learning rate $5 \times 10^{-4}$, $\beta_1$ = 0.9, and $\beta_2$=0.99 with weight decay of $0.01$. We conduct pretraining of all models on sequence length 2048 for 11k optimization steps with each step having effective batch size of 256 sequences. Then we fine-tune on sequence length 65536 for 500 optimization steps with each step having effective batch size of 16. We use Huggingface Transformers library \citep{hf-transformers} to train our models.

\subsection{Needle-In-A-Haystack Evaluations}

We conduct experiments on SNIAH and MNIAH tasks before and after fine-tuning on varying sequence lengths and number of needles. For these tasks, we modified the RULER tasks \citep{hsieh2024ruler} in the eval-harness framework \citep{eval-harness}. Note that RULER has multiple SNIAH and MNIAH tasks. Here, we are reporting average accuracies. Throughout the NIAH evaluations, LUCID Attention shows better retrieval capability compared to Standard Attention both for varying sequence lengths and varying number of needles. We also vary the finetuning sequence length and observe that LUCID scales better with finetuning sequence length compared to standard attention.

\begin{table}[t]
    \centering
    \caption{Performance comparison of different models against LUCID Attention for MNIAH task at 2048 sequence length.}
    \label{tab:mniah_results_2K_pt}
    \begin{adjustbox}{max width=\columnwidth, center}
    \begin{small}
        \begin{tabular}{l *{3}{S[table-format=2.1, round-mode=none]}}
            \toprule
            \multicolumn{1}{c}{} & \multicolumn{3}{c}{\textbf{Number of Needles}} \\
            \cmidrule(lr){2-4}
            {\textbf{Model}} & {\textbf{2}} & {\textbf{4}} & {\textbf{6}} \\
            \midrule
            \textbf{Standard} & 74.2 & 51.0 & 38.8 \\
            \textbf{Diff Trans.} & 72.4 & 35.4 & 26.0 \\
            \textbf{DeltaNet} & 37.0 & 16.8 & 11.0 \\
            \textbf{Path} & 61.4 & 44.2 & 37.2 \\
            \textbf{LUCID} & \textbf{76.6} & \textbf{55.8} & \textbf{43.6} \\
            \bottomrule
        \end{tabular}
        \end{small}
    \end{adjustbox}
\end{table}

\begin{figure*}[t]
    \centering
    \includegraphics[width=0.8\textwidth]{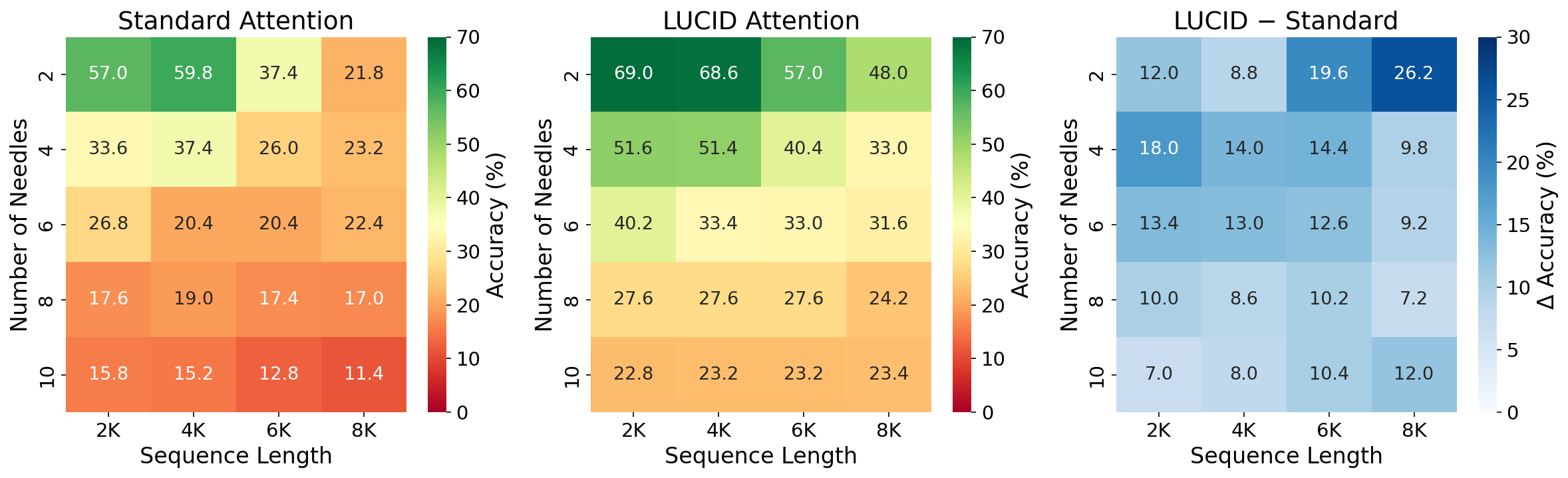}
    \caption{Performance comparison on MNIAH across varying number of needles and sequence lengths. \textbf{Left:} Standard Attention accuracy degrades sharply as task difficulty increases (more needles, longer sequences), dropping to 11.4\% in the hardest configuration. \textbf{Middle:} LUCID Attention maintains substantially higher accuracy across all settings. \textbf{Right:} The difference highlights consistent improvements of 10--26\%, with LUCID providing the largest gains at longer sequence lengths where Standard Attention struggles most.}
    \label{fig:mniah_heatmap_full}
\end{figure*}

\begin{figure}[t]
    \centering
    \includegraphics[width=0.6\columnwidth]{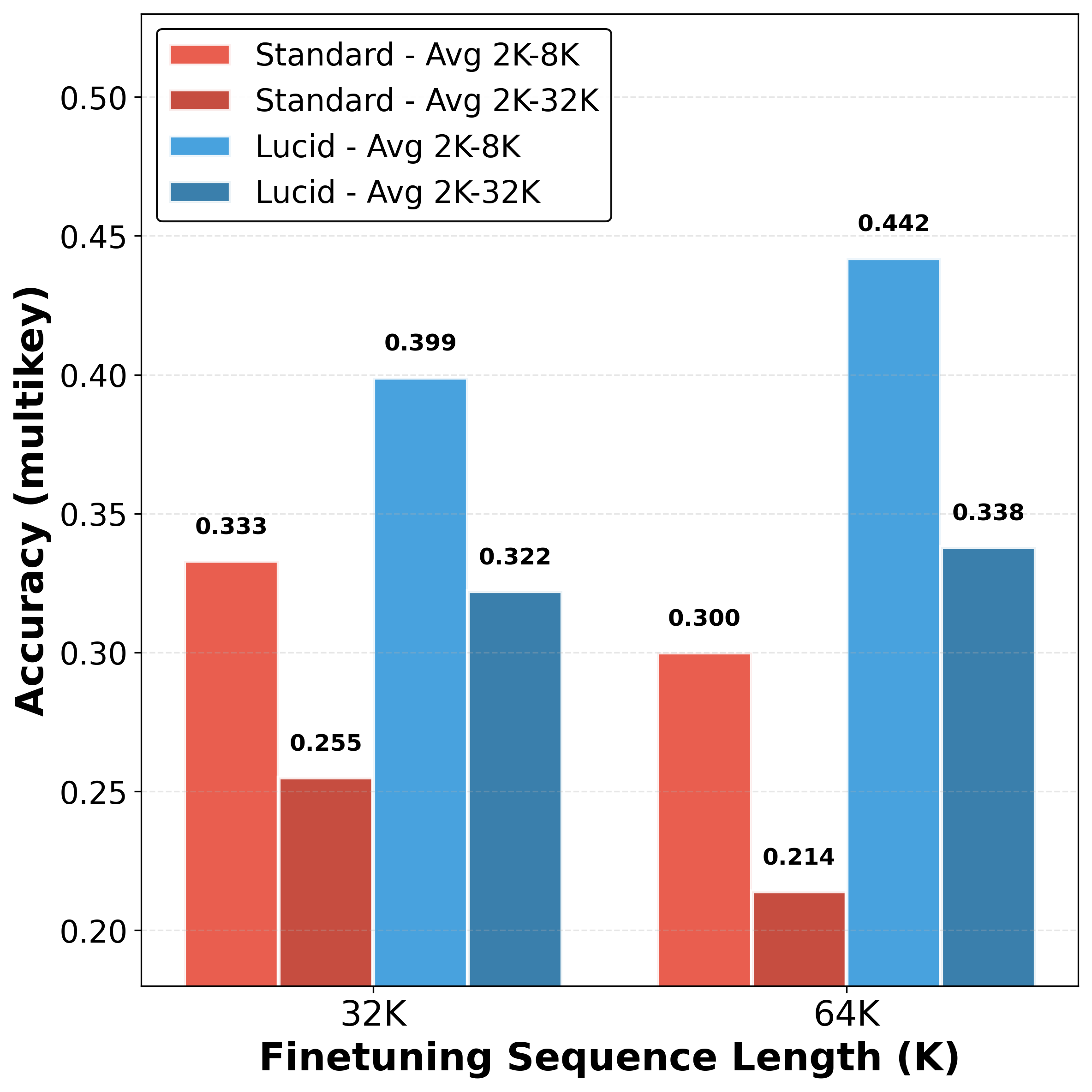}
    \caption{
        \textbf{Multi-needle retrieval accuracy improves with longer finetuning for LUCID.}
        Models finetuned at 32K and 64K sequence lengths are evaluated on multi-needle tasks with contexts averaged over 2K-8K and 2K-32K. The performance gap between LUCID and Standard Attention increases from +19.8\% (32K finetuning) to +47.3\% (64K finetuning).
    }
    \label{fig:finetuning_scaling}
\end{figure}

\subsection{Long-Context Reasoning: BABILong}
\label{sec:babilong}

We evaluate on BABILong~\citep{kuratov2024babilong}, a benchmark designed to test language models on ``needle-in-a-haystack'' reasoning tasks at scale. BABILong extends the classic bAbI tasks~\citep{weston2015towards} by embedding reasoning problems within long distractor texts sampled from PG19. We focus on QA1--QA5, which test fact retrieval and multi-hop reasoning. We finetune all models on the BABILong training set and evaluate at three context lengths: 32K, 64K, and 128K. We introduce a variant of LUCID called LUCID-PATH, which combines LUCID's key decorrelation mechanism with PaTH positional encoding~\citep{yang2025pathattentionpositionencoding}, enabling length extrapolation to contexts longer than those seen during training. The integration applies LUCID's preconditioner to the PaTH attention scores via a block-wise forward-substitution algorithm (see Appendix~\ref{sec:LUCID-PaTH} for details). 

\cref{fig:babilong} presents the average accuracy across QA1--QA5. The results reveal a striking difference in how methods scale with context length. Standard and Diff attention exhibit rapid performance degradation, dropping from ${\sim}0.14$ at 32K to near zero at 128K. Path attention shows moderate degradation but maintains some capability at longer contexts. In contrast, Lucid-Path demonstrates remarkable stability: accuracy remains between 0.21--0.25 across all context lengths, with minimal degradation from 32K to 128K. Notably, Lucid-Path achieves this with consistently low variance across tasks (tight shaded regions), indicating robust performance on both single-hop (QA1) and multi-hop (QA2--QA3) retrieval. These results suggest that Lucid-Path's attention mechanism more effectively preserves access to distributed facts in long sequences, a critical capability for real-world long-document understanding.

\begin{figure}[t]
    \centering
    \includegraphics[width=0.7\columnwidth]{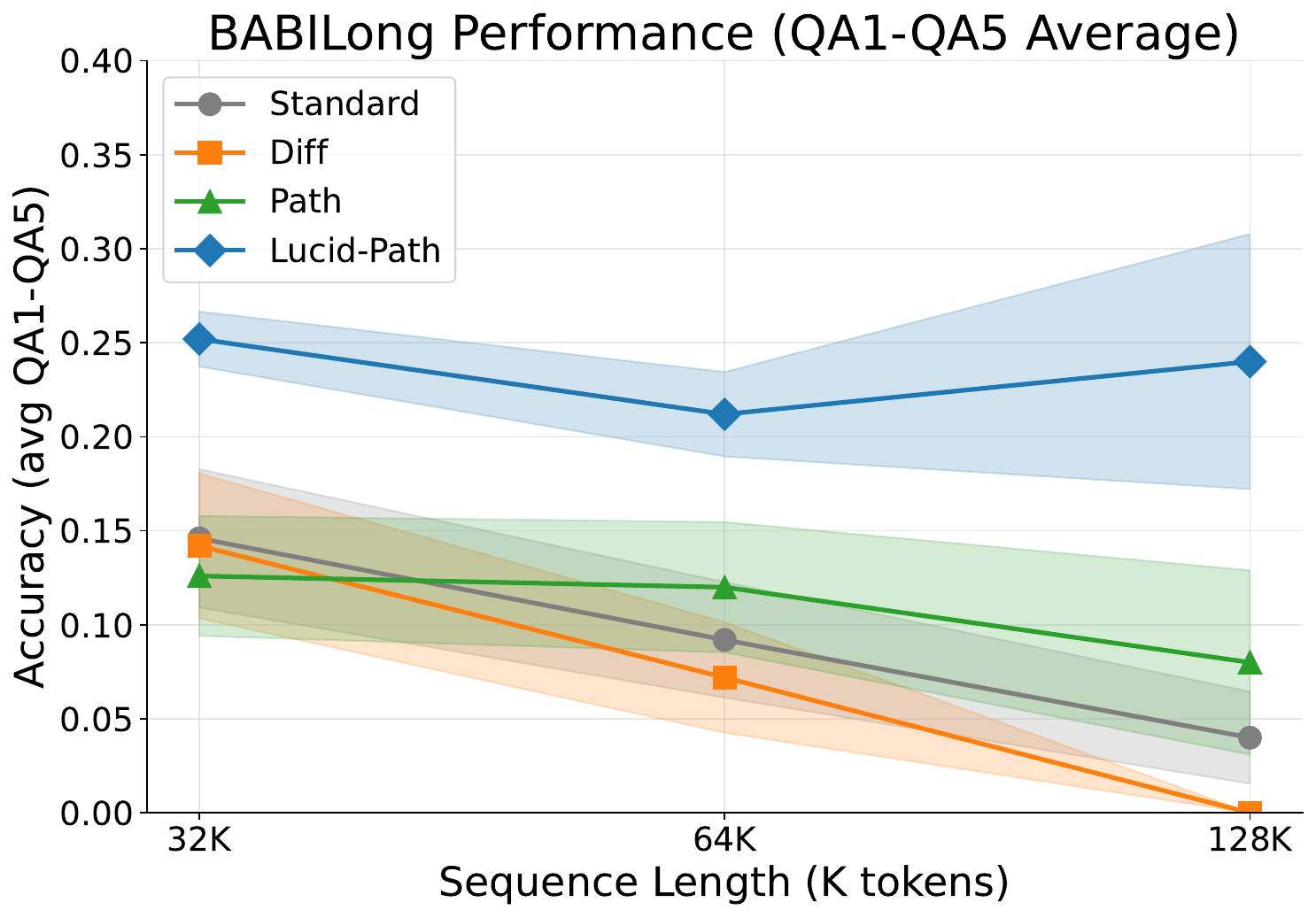}
    \caption{\textbf{BABILong long-context retrieval performance.} Tests multi-hop fact retrieval across increasingly long contexts (32K--128K tokens). While baseline methods exhibit substantial performance degradation as sequence length increases, Lucid-Path maintains stable accuracy.}
    \label{fig:babilong}
\end{figure}

\subsection{Long-Context Understanding: LongBench and SCROLLS}
\label{sec:longbench}

To evaluate performance on realistic long-context tasks, we benchmark on LongBench~\citep{bai2023longbench} and SCROLLS~\citep{shaham2022scrolls}, covering multi-document QA, single-document QA, and summarization. We evaluate on six tasks spanning three categories: multi-document QA (2WikiMQA, HotpotQA), single-document QA (MultifieldQA, Qasper), and summarization (QMSum). We compare against Standard softmax attention, Differential attention~\citep{ye2025differentialtransformer}, PaTH attention, and linear attention variants including DeltaNet~\citep{yang2024parallelizing}, GLA~\citep{yang2024gated}, and GSA~\citep{zhang2024gated}. Detailed task descriptions are provided in \cref{sec:task-details}.

\cref{tab:longbench} presents the results. LUCID-PaTH achieves the best performance on four of six tasks, with particularly strong gains on Qasper (+1.14 F1 over PaTH) and QMSum ROUGE-1 (+0.22 over PaTH). LUCID alone already outperforms most baselines, achieving the best HotpotQA F1 (0.0862) and QMSum ROUGE-L (12.60). Linear attention variants (DeltaNet, GLA, GSA) underperform substantially, particularly on multi-document QA where cross-document reasoning is critical---DeltaNet achieves only 0.036 F1 on 2WikiMQA compared to 0.274 for LUCID.

\begin{table}[t]
\centering
\caption{\textbf{LongBench and SCROLLS evaluation results.} F1 scores for QA tasks and ROUGE scores for summarization. Best results are in \textbf{bold}, second-best are \underline{underlined}.}
\label{tab:longbench}
\resizebox{\columnwidth}{!}{%
\begin{tabular}{lcccccc}
\toprule
\multirow{2}{*}{\textbf{Model}} & \multicolumn{2}{c}{\textbf{Multi-Doc QA}} & \multicolumn{2}{c}{\textbf{Single-Doc QA}} & \multicolumn{2}{c}{\textbf{Summ. (32K)}} \\
\cmidrule(lr){2-3} \cmidrule(lr){4-5} \cmidrule(lr){6-7}
 & 2WikiMQA & HotpotQA & MultifieldQA & Qasper & R-1 & R-L \\
\midrule
Standard & 0.240 & 0.073 & 0.113 & 7.69 & 11.79 & 10.39 \\
Diff & 0.263 & 0.081 & 0.139 & 8.70 & 13.19 & 11.07 \\
DeltaNet & 0.036 & 0.019 & 0.076 & 7.06 & 10.74 & 8.72 \\
GLA & 0.228 & 0.045 & 0.114 & 8.17 & 10.97 & 9.20 \\
GSA & 0.232 & 0.051 & 0.116 & 7.54 & 9.11 & 7.81 \\
\midrule
PaTH & \textbf{0.283} & 0.079 & 0.139 & 10.57 & 14.62 & 12.00 \\
LUCID & \underline{0.274} & \textbf{0.086} & \underline{0.144} & \underline{10.55} & \underline{14.79} & \textbf{12.60} \\
LUCID-PaTH & 0.275 & \underline{0.085} & \textbf{0.149} & \textbf{11.70} & \textbf{14.83} & \underline{12.57} \\
\bottomrule
\end{tabular}%
}
\end{table}

\subsection{Empirical Analysis of Attention Noise Reduction}
\label{sec:hitrate}

To provide direct empirical evidence that LUCID reduces attention noise, we analyze the \textit{hitrate}---the fraction of attention weight assigned to semantically relevant tokens during retrieval tasks. Specifically, on NIAH tasks, we measure the average attention weight placed on the ``needle'' tokens that contain the answer.


LUCID Attention achieves a hitrate of 0.2845 compared to 0.1817 for Standard Attention---a relative improvement of 56.6\%. This substantial increase demonstrates that LUCID's preconditioning effectively reduces attention noise by concentrating probability mass on the relevant tokens rather than dispersing it across irrelevant context. This empirical result corroborates our theoretical motivation: by decorrelating keys in the RKHS feature space, LUCID enables queries to focus more precisely on important keys.

\section{Conclusion}

Standard softmax attention faces fundamental challenges in long-context settings: correlated keys lead to diffuse attention distributions, and lowering temperature to sharpen focus causes vanishing gradients. LUCID Attention addresses these issues by preconditioning with a key-key similarity matrix, decorrelating keys in RKHS to improve attention focus while preserving full $\mathcal{O}(N^2d)$ complexity. Our experiments demonstrate substantial gains on long-context retrieval tasks---up to 18\% on BABILong and 14\% on RULER multi-needle---with minimal overhead. \textbf{Limitations.} Our work focuses on causal language modeling, where the triangular structure of the preconditioner matrix enables efficient linear solves with minimal overhead. However, for bidirectional settings such as diffusion models, the preconditioner loses its triangular structure, making the linear solve computationally expensive---exploring efficient solutions for this case remains an important direction for future work.

\section*{Acknowledgements}
We thank Manzil Zaheer, Devvrit Khatri, Rohan Anil, and Vineet Gupta for helpful discussions.

\bibliography{lucid_paper}
\bibliographystyle{icml2026}

\newpage
\appendix
\onecolumn


\section{Appendix}

\noindent\textbf{Table of Contents}\\[0.5em]
\noindent\textbf{A.1}\quad KV Caching \dotfill \pageref{sec:kv-caching}\\
\textbf{A.2}\quad Memory Efficient Implementation of LUCID \dotfill \pageref{sec:blockwise-lucid}\\
\textbf{A.3}\quad LUCID-PaTH \dotfill \pageref{sec:LUCID-PaTH}\\
\textbf{A.4}\quad Relation to DeltaNet \dotfill \pageref{sec:deltanet}\\
\textbf{A.5}\quad Ablations \dotfill \pageref{sec:ablations}\\
\textbf{A.6}\quad LongBench and SCROLLS Task Details \dotfill \pageref{sec:task-details}\\
\vspace{1em}

\subsection{KV Caching}
\label{sec:kv-caching}

To enable LUCID for auto-regressive behavior, we need to cache unnormalized keys $K_\text{past} \in \mathbb{R}^{L_\text{past}\times d}$ and the solution of the triangular solver $(M \circ \exp(K_\text{RN(past)}K_\text{RN(past)}^\top / \sqrt{d} - \sqrt{d}))Y_\text{past} = V_\text{past}$. When new keys $K_\text{new} \in \mathbb{R}^{L_\text{new}\times d}$ and values $V_\text{new}\in \mathbb{R}^{L_\text{new}\times d}$ are generated during decoding, new solutions of the triangular system $Y_\text{new}\in \mathbb{R}^{L_\text{new}\times d}$ can be easily computed by solving smaller triangular system
\begin{align*}
    \left( {M} \circ \exp \left( \frac{K_\text{RN(new)}K_\text{RN(new)}^\top}{\sqrt{d}} - \sqrt{d} \right) \right) Y_\text{new} = V_\text{new} - \exp \left( \frac{K_\text{RN(new)}K_\text{RN(past)}^\top}{\sqrt{d}} - \sqrt{d} \right)Y_\text{past}.
\end{align*}
Once $Y_\text{new}$ is obtained, the new outputs of LUCID Attention can be computed through the standard softmax KV caching mechanism. Asymptotically, the auto-regressive complexity of LUCID Attention is same as that of Standard Attention.

\textbf{Decoding Details.} During auto-regressive decoding, we generate one token at a time ($L_\text{new} = 1$). The KV cache stores the unnormalized keys $K_\text{past}$ and the preconditioned values $Y_\text{past}$ from all previous tokens. For each new token, we receive the query $Q_\text{new} \in \mathbb{R}^{1 \times d}$, key $K_\text{new} \in \mathbb{R}^{1 \times d}$, and value $V_\text{new} \in \mathbb{R}^{1 \times d}$.

The decoding process proceeds in two stages. First, we compute the preconditioned value $Y_\text{new}$ for the new token. We apply RMSNorm to obtain the normalized keys $K_\text{RN(new)} = \text{RMSNorm}(K_\text{new})$ and $K_\text{RN(past)} = \text{RMSNorm}(K_\text{past})$. Note that $K_\text{RN(past)}$ can be cached to avoid recomputation. For single-token decoding, the diagonal block of the preconditioner matrix is simply $\exp(0) = 1$ (since $K_\text{RN(new)}K_\text{RN(new)}^\top / \sqrt{d} - \sqrt{d} = 1 - 1 = 0$ when RMS normalized), so no triangular solve is needed. The preconditioned value simplifies to:
\begin{align*}
    Y_\text{new} = V_\text{new} - \exp \left( \frac{K_\text{RN(new)}K_\text{RN(past)}^\top}{\sqrt{d}} - \sqrt{d} \right) Y_\text{past}.
\end{align*}

Second, we compute the attention output $O_\text{new}$ using the standard softmax attention mechanism with the cached keys and the preconditioned values:
\begin{align*}
    O_\text{new} = \text{softmax}\left( \frac{Q_\text{new} [K_\text{past}; K_\text{new}]^\top}{\sqrt{d}} \right) [Y_\text{past}; Y_\text{new}].
\end{align*}

Finally, we update the KV cache by appending $K_\text{new}$ and $Y_\text{new}$ to the cached keys and preconditioned values, respectively. The complete decoding procedure is summarized in Algorithm~\ref{alg:lucid_decoding}.

\begin{algorithm}[ht]
    \caption{LUCID Attention Decoding}
    \label{alg:lucid_decoding}
    \begin{algorithmic}[1]
        \STATE \textbf{Input:} $Q_\text{new} \in \mathbb{R}^{B \times H_Q \times 1 \times D}, K_\text{new} \in \mathbb{R}^{B \times H \times 1 \times D}, V_\text{new} \in \mathbb{R}^{B \times H \times 1 \times D}, \sqrt{d}, \text{KV\_Cache}$
        \STATE \textbf{Output:} $O_\text{new} \in \mathbb{R}^{B \times H_Q \times 1 \times D}$, updated KV\_Cache
        \STATE
        \STATE \textcolor{gray}{// Retrieve cached keys and preconditioned values}
        \STATE $(K_\text{past}, Y_\text{past}) \gets \text{KV\_Cache},\qquad\qquad K_\text{past}, Y_\text{past} \in \mathbb{R}^{B \times H \times L_\text{past} \times D}$
        \STATE
        \STATE \textcolor{gray}{// Stage 1: Compute preconditioned value $Y_\text{new}$}
        \STATE $K_\text{RN(new)} \gets \text{RMSNorm}(K_\text{new})$
        \STATE $K_\text{RN(past)} \gets \text{RMSNorm}(K_\text{past})$
        \STATE $S \gets K_\text{RN(new)} \cdot K_\text{RN(past)}^\top / \sqrt{d} - \sqrt{d}$
        \STATE $Y_\text{new} \gets V_\text{new} - \exp(S) \cdot Y_\text{past}$
        \STATE
        \STATE \textcolor{gray}{// Stage 2: Compute attention output using standard decoding}
        \STATE $K_\text{full} \gets [K_\text{past}; K_\text{new}]$
        \STATE $Y_\text{full} \gets [Y_\text{past}; Y_\text{new}]$
        \STATE $O_\text{new} \gets \text{Attention\_Decoding}(Q_\text{new}, K_\text{full}, Y_\text{full})$
        \STATE
        \STATE \textcolor{gray}{// Update KV cache with new key and preconditioned value}
        \STATE $\text{KV\_Cache} \gets (K_\text{full}, Y_\text{full})$
        \STATE
        \STATE \textbf{return} $O_\text{new}$, KV\_Cache
    \end{algorithmic}
\end{algorithm}

The key insight is that LUCID's preconditioner can be applied incrementally during decoding. The computational overhead compared to standard attention is a single matrix-vector multiplication to compute $\exp(S) \cdot Y_\text{past}$, which is $O(L_\text{past} \cdot d)$ the same complexity as the standard attention computation over the cached keys. This makes LUCID's inference overhead negligible in practice.

\FloatBarrier
\subsection{Memory Efficient Implementation of LUCID}
\label{sec:blockwise-lucid}



We develop a memory friendly block-wise algorithms for both the forward and backward passes, exploiting the Gram matrix structure of the LUCID preconditioner matrix $P = \exp(KK^\top/\sqrt{d} - \sqrt{d})$. This enables processing of arbitrarily long sequences within GPU memory constraints. The forward pass of the block-wise implementation is presented in \cref{alg:forward_tril_solver}. 

\begin{algorithm}[ht]
    \caption{Block-wise Forward Pass: Lower Triangular System Solver}
    \label{alg:forward_tril_solver}
    \begin{algorithmic}[1]
        \STATE \textbf{Input:} $K_{RN} \in \mathbb{R}^{B \times H \times L \times D}, V \in \mathbb{R}^{B \times H \times L \times D}, \sqrt{d}, BS$
        \STATE \textbf{Output:} $Y \in \mathbb{R}^{B \times H \times L \times D}$
        \STATE $L \gets \text{length}(K_{RN})$
        \STATE $Y \gets \text{zeros\_like}(V)$
        \FOR{$i = 0$ \textbf{to} $L - BS$ \textbf{by} $BS$}
            \STATE $\text{rhs} \gets V[i: i+BS]$
            \STATE \textcolor{gray}{// Sequential inner loop from start to current block}
            \FOR{$j = 0$ \textbf{to} $i - BS$ \textbf{by} $BS$}
                \STATE $\exp_{ij} \gets \exp(K_{RN}[i: i+BS] \cdot K_{RN}[j: j+BS]^T / \sqrt{d} - \sqrt{d})$
                \STATE $\text{rhs} \gets \text{rhs} - \exp_{ij} \cdot Y[j: j+BS]$
            \ENDFOR
            \STATE \textcolor{gray}{// Solve diagonal block}
            \STATE $\exp_{ii} \gets \exp(K_{RN}[i: i+BS] \cdot K_{RN}[i: i+BS]^T / \sqrt{d} - \sqrt{d})$
            \STATE $Y[i: i+BS] \gets \texttt{torch.linalg.solve\_triangular}(\exp_{ii}, \text{rhs}, \text{lower=true}, \text{unitdiagonal=true})$
        \ENDFOR
        \STATE \textbf{return} $Y$
    \end{algorithmic}
\end{algorithm}

\textbf{Block-wise Forward Pass.} Algorithm~\ref{alg:forward_tril_solver} solves the lower-triangular system $(I + L)Y = V$ in a block-wise manner, where $L$ is strictly lower-triangular with entries $L_{ij} = \exp(K_{RN(i)} K_{RN(j)}^\top/\sqrt{d} - \sqrt{d})$ for $i > j$. The algorithm processes blocks sequentially from top to bottom (line 5), maintaining the causal dependency structure of the forward-substitution process.

For each row block $i$, we initialize the right-hand side with the corresponding block of values $V[i:i+BS]$ (line 6). The inner loop (lines 8-11) iterates through all previous column blocks $j < i$, computing the block attention score $\exp_{ij}$ and subtracting the contribution $\exp_{ij} \cdot Y[j:j+BS]$ from the right-hand side. This removes the influence of all earlier blocks, isolating the contribution that needs to be solved for. After processing all previous blocks, we solve the diagonal block system using the triangular solver (lines 13-14) to obtain $Y[i:i+BS]$. The diagonal block uses the unit diagonal option since the implicit diagonal of $(I + L_{ii})$ is identity. This block-wise forward-substitution is mathematically equivalent to the full solve but operates on manageable block sizes.

For the backward pass, following the derivations in~\cite{giles2008mlmc}, we obtain the gradients $\nabla_{K_{RN}}$ and $\nabla_V$. 

\begin{align*}
    \nabla_V &= (P^\top)^{-1}\nabla_Y \\
    \nabla_{K_{RN}} &= -\frac{1}{\sqrt{d}}\left( \nabla_V Y^\top \odot P + Y\nabla_V^\top \odot P\right) K_{RN}
\end{align*}

The gradient with respect to $V$ involves solving a transposed triangular system, similar to the forward pass but processing blocks in reverse order. The gradient with respect to $K_{RN}$ is more involved, as it requires computing contributions from both $\nabla_V Y^\top \odot P$ and $Y\nabla_V^\top \odot P$, where $P = (I + L)^{-1}$ is the lower-triangular preconditioner. The Hadamard products with $P$ must be carefully computed in a block-wise manner to avoid materializing the full matrix. The block-wise implementations for computing $\nabla_V$ and $\nabla_{K_{RN}}$ are provided in \cref{alg:backward_grad_v} and \cref{alg:backward_grad_k}, respectively. 

\begin{algorithm}[ht]
    \caption{Block-wise Backward Pass: Gradient Computation for Values}
    \label{alg:backward_grad_v}
    \begin{algorithmic}[1]
        \STATE \textbf{Input:} $K_{RN} \in \mathbb{R}^{B \times H \times L \times D}, \nabla_Y \in \mathbb{R}^{B \times H \times L \times D}, \sqrt{d}, BS$
        \STATE \textbf{Output:} $\nabla_V \in \mathbb{R}^{B \times H \times L \times D}$
        \STATE $L \gets \text{length}(K_{RN})$
        \STATE $\nabla_V \gets \text{zeros\_like}(K_{RN})$
        \FOR{$i = L - BS$ \textbf{downto} $0$ \textbf{by} $BS$}
            \STATE $\text{rhs} \gets \nabla_Y[i: i+BS]$
            \STATE \textcolor{gray}{// Sequential inner loop from end to current block}
            \FOR{$j = L - BS$ \textbf{downto} $i + BS$ \textbf{by} $BS$}
                \STATE $\exp_{ji} \gets \exp(K_{RN}[j: j+BS] \cdot K_{RN}[i: i+BS]^T / \sqrt{d} - \sqrt{d})$
                \STATE $\text{rhs} \gets \text{rhs} - \exp_{ji}^T \cdot \nabla_V[j: j+BS]$
            \ENDFOR
            \STATE \textcolor{gray}{// Solve diagonal block}
            \STATE $\exp_{ii} \gets \exp(K_{RN}[i: i+BS] \cdot K_{RN}[i: i+BS]^T / \sqrt{d} - \sqrt{d})$
            \STATE $\nabla_V[i: i+BS] \gets \texttt{torch.linalg.solve\_triangular}(\exp_{ii}, \text{rhs}, \text{lower=false}, \text{unitdiagonal=true})$
        \ENDFOR
        \STATE \textbf{return} $\nabla_V$
    \end{algorithmic}
\end{algorithm}

\textbf{Computing $\nabla_V$ via Backward-Substitution.} Algorithm~\ref{alg:backward_grad_v} solves the transposed triangular system $(I + L)^\top \nabla_V = \nabla_Y$ to compute the gradient with respect to the input values. Since $(I + L)$ is lower-triangular, its transpose is upper-triangular, requiring backward-substitution that processes blocks from bottom to top (line 5).

For each row block $i$, we initialize the right-hand side with $\nabla_Y[i:i+BS]$ (line 5). The inner loop (lines 8-11) iterates through all later blocks $j > i$ in reverse order, computing $\exp_{ji}^\top$ and subtracting its contribution from the right-hand side. This propagates gradients from later blocks back to the current block. After accumulating contributions from all later blocks, we solve the diagonal block system using the transposed triangular solver (lines 14), with \texttt{lower=false} to indicate an upper-triangular solve. This backward-substitution mirrors the structure of the forward pass but processes blocks in reverse order.

\begin{algorithm}[ht]
    \caption{Block-wise Backward Pass: Gradient Computation for Keys}
    \label{alg:backward_grad_k}
    \begin{algorithmic}[1]
        \STATE \textbf{Input:} $K_{RN} \in \mathbb{R}^{B \times H \times L \times D}, Y \in \mathbb{R}^{B \times H \times L \times D}, \nabla_V \in \mathbb{R}^{B \times H \times L \times D}, \sqrt{d}, BS$
        \STATE \textbf{Output:} $\nabla_{K_{RN}} \in \mathbb{R}^{B \times H \times L \times D}$
        \STATE $L \gets \text{length}(K_{RN})$
        \STATE $\nabla_{K_{RN}} \gets \text{zeros\_like}(K_{RN})$
        \FOR{$i = 0$ \textbf{to} $L - BS$ \textbf{by} $BS$}
            \STATE $\exp_{ii} \gets \exp(K_{RN}[i: i+BS] \cdot K_{RN}[i: i+BS]^T / \sqrt{d} - \sqrt{d})$
            \STATE $M_{ii} \gets \nabla_V[i: i+BS] \cdot Y[i: i+BS]^T$
            \STATE $k_1 \gets (M_{ii} \odot \text{tril}(\exp_{ii})) \cdot K_{RN}[i: i+BS]$
            \STATE \textcolor{gray}{// Compute corresponding block from end}
            \STATE $i_{end} \gets L - (i + BS)$
            \STATE $\exp_{i_{end}i_{end}} \gets \exp(K_{RN}[i_{end}: i_{end}+BS] \cdot K_{RN}[i_{end}: i_{end}+BS]^T / \sqrt{d} - \sqrt{d})$
            \STATE $M_{i_{end}i_{end}} \gets Y[i_{end}: i_{end}+BS] \cdot \nabla_V[i_{end}: i_{end}+BS]^T$
            \STATE $k_2 \gets (M_{i_{end}i_{end}} \odot \text{triu}(\exp_{i_{end}i_{end}})) \cdot K_{RN}[i_{end}: i_{end}+BS]$
            \STATE \textcolor{gray}{// Accumulate off-diagonal contributions}
            \FOR{$j = 0$ \textbf{to} $i - BS$ \textbf{by} $BS$}
                \STATE $\exp_{ij} \gets \exp(K_{RN}[i: i+BS] \cdot K_{RN}[j: j+BS]^T / \sqrt{d} - \sqrt{d})$
                \STATE $M_{ij} \gets \nabla_V[i: i+BS] \cdot Y[j: j+BS]^T$
                \STATE $k_1 \gets k_1 + (M_{ij} \odot \exp_{ij}) \cdot K_{RN}[j: j+BS]$
                \STATE $j_{end} \gets L - (j + BS)$
                \STATE $\exp_{j_{end}i_{end}} \gets \exp(K_{RN}[j_{end}: j_{end}+BS] \cdot K_{RN}[i_{end}: i_{end}+BS]^T / \sqrt{d} - \sqrt{d})$
                \STATE $M_{i_{end}j_{end}} \gets Y[i_{end}: i_{end}+BS] \cdot \nabla_V[j_{end}: j_{end}+BS]^T$
                \STATE $k_2 \gets k_2 + (M_{i_{end}j_{end}} \odot \exp_{j_{end}i_{end}}^T) \cdot K_{RN}[j_{end}: j_{end}+BS]$
            \ENDFOR
            \STATE $\nabla_{K_{RN}}[i: i+BS] \gets \nabla_{K_{RN}}[i: i+BS] + k_1$
            \STATE $\nabla_{K_{RN}}[i_{end}: i_{end}+BS] \gets \nabla_{K_{RN}}[i_{end}: i_{end}+BS] + k_2$
        \ENDFOR
        \STATE $\nabla_{K_{RN}} \gets -\nabla_{K_{RN}} \cdot \sqrt{d}$
        \STATE \textbf{return} $\nabla_{K_{RN}}$
    \end{algorithmic}
\end{algorithm}

\textbf{Computing $\nabla_{K_{RN}}$ with Mirrored Block Processing.} Algorithm~\ref{alg:backward_grad_k} computes the gradient with respect to the key matrix $K_{RN}$ using the formula $\nabla_{K_{RN}} = -\frac{1}{\sqrt{d}}\left( \nabla_V Y^\top \odot P + Y\nabla_V^\top \odot P\right) K_{RN}$. The key challenge is efficiently computing the two terms $\nabla_V Y^\top \odot P$ and $Y\nabla_V^\top \odot P$ in a block-wise manner. The algorithm employs a mirroring strategy to process both the lower-triangular part (from $\nabla_V Y^\top \odot P$) and the upper-triangular part (from $Y\nabla_V^\top \odot P$) simultaneously, reducing the number of block loads.

The outer loop (line 5) processes blocks from the beginning of the sequence. For each block $i$, the algorithm computes contributions from both the lower-triangular term at block $i$ (lines 6-8) and the corresponding upper-triangular term at the mirrored position $i_{end} = L - (i + BS)$ from the end (lines 10-13). The diagonal blocks require special handling with \texttt{tril} and \texttt{triu} masks (lines 8 and 13) to separate the lower and upper triangular contributions.

The inner loop (lines 15-23) accumulates off-diagonal contributions for both positions simultaneously. For each earlier block $j < i$, we compute the lower-triangular contribution $M_{ij} \odot \exp_{ij}$ for position $i$ (lines 16-18) and the upper-triangular contribution $M_{i_{end}j_{end}} \odot \exp_{j_{end}i_{end}}^\top$ for the mirrored position (lines 19-22). This mirroring strategy exploits symmetry in the computation, processing two blocks per iteration and reducing memory traffic. Finally, the accumulated gradients are scaled by $-1/\sqrt{d}$ (line 27) to account for the normalization factor in the gradient formula.

\FloatBarrier
\subsection{LUCID-PaTH}
\label{sec:LUCID-PaTH}

To enable length extrapolation in BABILong tasks, we incorporated PaTH positional encoding~\citep{yang2025pathattentionpositionencoding} in LUCID by modifying the PaTH-attention triton kernel. The key challenge in integrating PaTH with LUCID lies in efficiently computing the preconditioned values while respecting the causal structure imposed by PaTH's attention mechanism.

The LUCID-PaTH preconditioner employs a block-wise forward-substitution algorithm, similar to solving a lower-triangular linear system. We denote the block size as $BS$, which divides the sequence length $L$ into $N_{blocks} = \lceil{L / BS}\rceil$ blocks. The core idea is to iteratively refine each block of the output $Y$ by removing the influence of all previous blocks, weighted by the exponential of their PaTH attention scores. This process ensures that the preconditioned values $Y$ properly account for the structured dependencies encoded in the PaTH positional embeddings. The pseudo-code for this preconditioner is shown in \cref{alg:LUCID-PaTH_alg}. 

\begin{algorithm}[ht]
    \caption{LUCID-PaTH preconditioner using Block-wise Forward-Substitution.}
    \label{alg:LUCID-PaTH_alg}
    \begin{algorithmic}[1]
        \STATE \textbf{Input:} $K_{RN} \in \mathbb{R}^{B \times H \times L \times D}, V \in \mathbb{R}^{B \times H \times L \times D}, W \in \mathbb{R}^{B \times H \times L \times D}, \boldsymbol{\beta} \in \mathbb{R}^{B \times H \times L}, \sqrt{d}$
        \STATE \textbf{Output:} $Y \in \mathbb{R}^{B \times H \times L \times D}$
        \STATE $Y \gets V.\text{clone}()$
        \STATE $N_{blocks} \gets \text{num\_row\_blocks}$
        \FOR{$i = 0$ \textbf{to} $N_{blocks} - 1$}
            \STATE $S_{ii} \gets \text{PaTH\_Score}(K_{RN}, K_{RN}, W, \boldsymbol{\beta}, i, i) / \sqrt{d} - \sqrt{d}$
            \STATE \textcolor{gray}{// Sequential inner loop from diagonal to the left}
            \FOR{$j = i - 1$ \textbf{downto} $0$}
                \STATE $S_{ij} \gets \text{PaTH\_Score}(K_{RN}, K_{RN}, W, \boldsymbol{\beta}, i, j) / \sqrt{d} - \sqrt{d}$
                \STATE $Y[i] \gets Y[i] - \exp(S_{ij}) Y[j]$
            \ENDFOR
            \STATE \textcolor{gray}{// Normalize current row block with diagonal solver}
            \STATE $O[i] \gets \texttt{torch.linalg.solve\_triangular}(\exp(S_{ii}), O[i], \text{lower=true}, \text{unitdiagonal=true})$
        \ENDFOR
        \STATE \textbf{return} $Y$
    \end{algorithmic}
\end{algorithm}

The algorithm operates as follows. We initialize $Y$ as a copy of the value matrix $V$ (line 3), which serves as our starting point for the forward-substitution process. The outer loop (lines 5-14) iterates over each row block $i$ from top to bottom, processing them sequentially. For each row block $i$, we first compute the diagonal block's PaTH attention score $S_{ii}$ (line 6), which captures the self-attention within the block.

The inner loop (lines 8-11) implements the forward-substitution step, iterating backwards through all previous column blocks $j < i$. For each pair $(i, j)$, we compute the PaTH attention score $S_{ij}$ and subtract the contribution of block $j$ from block $i$ of the output: $Y[i] \gets Y[i] - \exp(S_{ij}) Y[j]$. This subtraction removes the influence of earlier blocks, effectively isolating the contribution that needs to be normalized. The backward iteration order is crucial: following~\citep{yang2025pathattentionpositionencoding}, for any row block, column blocks must be generated from the diagonal block to the left-most block to maintain the proper causal dependencies in PaTH attention.

After processing all previous blocks, we normalize the current row block $Y[i]$ using the diagonal block scores (line 13). Specifically, we solve the lower-triangular system $\exp(S_{ii}) \cdot Y[i] = O[i]$ using triangular solver with unit diagonal. This diagonal normalization completes the preconditioning for block $i$, ensuring that the block is properly scaled according to its self-attention pattern.

Here, the $\text{PaTH\_Score}(Q, K, W, \boldsymbol{\beta}, i, j)$ routine generates the $[i, j]$ block of the PaTH-attention score matrix, defined as:

\begin{align*}
    \text{PaTH\_Score}(Q, K, W, \boldsymbol{\beta}) = \text{tril}(QK^\top) - \text{tril}(QW^\top)\left( I + \text{diag}(\boldsymbol{\beta})\text{stril}(WW^\top) \right)^{-1}\text{diag}(\boldsymbol{\beta})\text{stril}(WK^\top).
\end{align*}

Once we obtain the preconditioned output $Y$ from \cref{alg:LUCID-PaTH_alg}, we can compute the final LUCID-PaTH attention output by applying the standard PaTH attention mechanism on $Y$ instead of the original values $V$. This decouples the preconditioning step from the attention computation, enabling efficient implementation: 

\begin{align*}
    \label{eqn:LUCID-PaTH_eqn}
    \text{LUCID-PaTH}(Q, K, V, W, \boldsymbol{\beta}) = \left(  \text{softmax} \left( \frac{\text{PaTH\_Score}(Q, K, W, \boldsymbol{\beta})}{\sqrt{d}} + \hat{M} \right) \right) Y. 
\end{align*}

The backward pass for LUCID-PaTH requires careful handling of gradients through the block-wise forward-substitution process described in \cref{alg:LUCID-PaTH_alg}. Recall that the forward pass solves the lower-triangular system $(I + L)Y = V$, where $L$ is strictly lower-triangular with entries $L_{ij} = \exp(S_{ij})$ for $i > j$. During backpropagation, we must compute gradients with respect to the input values $V$ as well as the PaTH parameters $K_{\text{RN}}$, $W$, and $\boldsymbol{\beta}$ that determine the attention scores.

The backward pass algorithms follow the structure provided in Appendix \cref{sec:blockwise-lucid} while preserving the PaTH implementation structure. Specifically, we maintain the constraint that column blocks must be generated from the diagonal block to the left-most block, as required by~\citep{yang2025pathattentionpositionencoding}. We present the backward pass algorithms for computing $\nabla_V$, $\nabla_{K_\text{RN}}$, $\nabla_W$, and $\nabla_{\boldsymbol{\beta}}$ in \cref{alg:LUCID-PaTH_bwd_gradV}, \cref{alg:LUCID-PaTH_bwd_gradK}, and \cref{alg:LUCID-PaTH_bwd_gradW_beta}, respectively. 

\begin{algorithm}[ht]
    \caption{LUCID-PaTH Backward Pass: Computing $\nabla_V$ using Block-wise Backward-Substitution.}
    \label{alg:LUCID-PaTH_bwd_gradV}
    \begin{algorithmic}[1]
        \STATE \textbf{Input:} $K_{RN} \in \mathbb{R}^{B \times H \times L \times D}, W \in \mathbb{R}^{B \times H \times L \times D}, \boldsymbol{\beta} \in \mathbb{R}^{B \times H \times L}, \nabla_Y \in \mathbb{R}^{B \times H \times L \times D}, \sqrt{d}$
        \STATE \textbf{Output:} $\nabla_V \in \mathbb{R}^{B \times H \times L \times D}$
        \STATE \textcolor{gray}{// Solve $(I + L)^T \nabla_V = \nabla_Y$ where $L$ is strictly lower triangular}
        \STATE \textcolor{gray}{// $L^T$ is strictly upper triangular, requiring backward substitution}
        \STATE $\nabla_V \gets \nabla_Y.\text{clone}()$
        \STATE $N_{blocks} \gets \text{num\_row\_blocks}$
        \FOR{$i = N_{blocks} - 1$ \textbf{downto} $0$}
            \STATE $S_{ii} \gets \text{PaTH\_Score}(K_{RN}, K_{RN}, W, \boldsymbol{\beta}, i, i) / \sqrt{d} - \sqrt{d}$
            \STATE \textcolor{gray}{// Solve diagonal block with transposed upper triangular matrix}
            \STATE $\nabla_V[i] \gets \texttt{torch.linalg.solve\_triangular}(\exp(S_{ii})^T, \nabla_V[i], \text{lower=false}, \text{unitdiagonal=true})$
            \STATE \textcolor{gray}{// Sequential inner loop from diagonal to the left (update earlier blocks)}
            \FOR{$k = i - 1$ \textbf{downto} $0$}
                \STATE $S_{ik} \gets \text{PaTH\_Score}(K_{RN}, K_{RN}, W, \boldsymbol{\beta}, i, k) / \sqrt{d} - \sqrt{d}$
                \STATE $\nabla_V[k] \gets \nabla_V[k] - \exp(S_{ik})^T \nabla_V[i]$
            \ENDFOR
        \ENDFOR
        \STATE \textbf{return} $\nabla_V$
    \end{algorithmic}
\end{algorithm}

\textbf{Computing $\nabla_V$ via Backward-Substitution.} Algorithm~\ref{alg:LUCID-PaTH_bwd_gradV} computes the gradient with respect to the input values $V$ by solving the transposed system $(I + L)^T \nabla_V = \nabla_Y$. Since $L$ is strictly lower-triangular, its transpose $L^T$ is strictly upper-triangular, requiring backward-substitution instead of forward-substitution. We initialize $\nabla_V$ with the incoming gradient $\nabla_Y$ (line 5) and process row blocks in reverse order from bottom to top (line 7).

For each row block $i$, we first solve the diagonal block using the transposed upper-triangular matrix $\exp(S_{ii})^T$ (line 10). This step normalizes the gradient for block $i$ by undoing the diagonal scaling applied in the forward pass. The inner loop (lines 12-15) then propagates gradients from block $i$ to all earlier blocks $k < i$ using the relationship $\nabla_V[k] \gets \nabla_V[k] - \exp(S_{ik})^T \nabla_V[i]$. This backward iteration ensures that gradients flow correctly through the forward-substitution dependencies, with each block receiving contributions from all later blocks that depended on it during the forward pass.

\begin{algorithm}[ht]
    \caption{LUCID-PaTH Backward Pass: Computing $\nabla_{K_{\text{RN}}}$.}
    \label{alg:LUCID-PaTH_bwd_gradK}
    \begin{algorithmic}[1]
        \STATE \textbf{Input:} $K_{RN} \in \mathbb{R}^{B \times H \times L \times D}, W \in \mathbb{R}^{B \times H \times L \times D}, \boldsymbol{\beta} \in \mathbb{R}^{B \times H \times L}, \nabla_V \in \mathbb{R}^{B \times H \times L \times D}, Y \in \mathbb{R}^{B \times H \times L \times D}, \sqrt{d}$
        \STATE \textbf{Output:} $\nabla_{K_{\text{RN}}} \in \mathbb{R}^{B \times H \times L \times D}$
        \STATE \textcolor{gray}{// Note: $\nabla_V$ is obtained from Algorithm \ref{alg:LUCID-PaTH_bwd_gradV}}
        \STATE $\nabla_{K_{\text{RN}}} \gets \mathbf{0}$
        \STATE $N_{blocks} \gets \text{num\_row\_blocks}$
        \STATE \textcolor{gray}{// Process all blocks to compute $\nabla_A$ where $A = \text{PaTH\_Score}(K_{RN}, K_{RN}, W, \boldsymbol{\beta})$}
        \FOR{$i = 0$ \textbf{to} $N_{blocks} - 1$}
            \STATE \textcolor{gray}{// Diagonal block: gradient through triangular solver (see Algorithm \ref{alg:TriangularSolveGrad})}
            \STATE $A_{ii} \gets \text{PaTH\_Score}(K_{RN}, K_{RN}, W, \boldsymbol{\beta}, i, i)$
            \STATE $\nabla_{A_{ii}} \gets \text{TriangularSolveGrad}(A_{ii}, \nabla_V[i], Y[i], \sqrt{d})$
            \STATE \textcolor{gray}{// Off-diagonal blocks: use LUCID formula $\nabla_P = -(\nabla_V Y^T) \odot M$}
            \FOR{$j = i - 1$ \textbf{downto} $0$}
                \STATE $A_{ij} \gets \text{PaTH\_Score}(K_{RN}, K_{RN}, W, \boldsymbol{\beta}, i, j)$
                \STATE $S_{ij} \gets A_{ij} / \sqrt{d} - \sqrt{d}$
                \STATE $\nabla_{P_{ij}} \gets -(\nabla_V[i] Y[j]^T)$
                \STATE \textcolor{gray}{// Chain rule: $P = \exp(S)$, $S = A/\sqrt{d} - \sqrt{d}$}
                \STATE $\nabla_{A_{ij}} \gets \nabla_{P_{ij}} \odot \exp(S_{ij}) / \sqrt{d}$
            \ENDFOR
        \ENDFOR
        \STATE \textcolor{gray}{// Use PaTH backward to compute $\nabla_{K_{\text{RN}}}$ from $\nabla_A$}
        \STATE $\nabla_{K_{\text{RN}}} \gets \text{PaTH\_Backward\_K}(\nabla_A, K_{RN}, W, \boldsymbol{\beta})$
        \STATE \textbf{return} $\nabla_{K_{\text{RN}}}$
    \end{algorithmic}
\end{algorithm}

\textbf{Computing $\nabla_{K_{\text{RN}}}$ through PaTH Scores.} Algorithm~\ref{alg:LUCID-PaTH_bwd_gradK} computes the gradient with respect to the key matrix $K_{\text{RN}}$ by first accumulating gradients with respect to the raw PaTH attention score matrix $A = \text{PaTH\_Score}(K_{RN}, K_{RN}, W, \boldsymbol{\beta})$, then using the PaTH backward pass to obtain $\nabla_{K_{\text{RN}}}$ from $\nabla_A$. The algorithm processes all row blocks (lines 7-19) to compute $\nabla_A$.

For each row block $i$, we handle two types of blocks. The diagonal block $A_{ii}$ involves a gradient through the triangular solver, computed using Algorithm~\ref{alg:TriangularSolveGrad} (lines 10). The off-diagonal blocks $A_{ij}$ for $j < i$ are processed in the inner loop (lines 12-18). Here, we first apply the LUCID gradient formula $\nabla_{P_{ij}} = -(\nabla_V[i] Y[j]^T)$ (line 11), which directly computes the gradient with respect to $P = \exp(S)$ without materializing the full attention matrix. We then apply the chain rule to propagate gradients back to the raw score $A$. Since the forward pass computes $S = A/\sqrt{d} - \sqrt{d}$ and then $P = \exp(S)$, the backward chain rule gives $\nabla_A = \nabla_P \odot \exp(S) / \sqrt{d}$ (line 13), where the division by $\sqrt{d}$ accounts for the scaling transformation from $A$ to $S$.

After computing $\nabla_A$ for all blocks, we invoke the PaTH backward routine $\text{PaTH\_Backward\_K}$ (line 15) to compute the final gradient $\nabla_{K_{\text{RN}}}$. This routine handles the complex dependencies in the PaTH score formula. The abstraction of this step allows us to leverage the existing PaTH implementation while focusing on the LUCID-specific gradient computations.

\begin{algorithm}[ht]
    \caption{TriangularSolveGrad: Gradient through diagonal block triangular solver.}
    \label{alg:TriangularSolveGrad}
    \begin{algorithmic}[1]
        \STATE \textbf{Input:} $A_{ii} \in \mathbb{R}^{B \times H \times BS \times BS}$, $\nabla_V[i] \in \mathbb{R}^{B \times H \times BS \times D}$, $Y[i] \in \mathbb{R}^{B \times H \times BS \times D}$, $\sqrt{d}$
        \STATE \textbf{Output:} $\nabla_{A_{ii}} \in \mathbb{R}^{B \times H \times BS \times BS}$
        \STATE \textcolor{gray}{// Forward: $S_{ii} = A_{ii} / \sqrt{d} - \sqrt{d}$, $L_{ii} = \exp(S_{ii}) \odot M_{\text{stril}}$}
        \STATE \textcolor{gray}{// Forward: $(I + L_{ii}) Y[i] = V_{\text{input}}[i]$ solved via triangular solver}
        \STATE \textcolor{gray}{// LUCID gradient formula for triangular solve}
        \STATE $\nabla_{P_{ii}} \gets -(\nabla_V[i] Y[i]^T) \odot M_{\text{stril}}$
        \STATE \textcolor{gray}{// Chain rule: $P = \exp(S)$, $S = A/\sqrt{d} - \sqrt{d}$}
        \STATE $S_{ii} \gets A_{ii} / \sqrt{d} - \sqrt{d}$
        \STATE $\nabla_{A_{ii}} \gets \nabla_{P_{ii}} \odot \exp(S_{ii}) / \sqrt{d}$
        \STATE \textbf{return} $\nabla_{A_{ii}}$
    \end{algorithmic}
\end{algorithm}

\textbf{Gradient through Diagonal Block Triangular Solver.} Algorithm~\ref{alg:TriangularSolveGrad} is a helper routine that computes the gradient with respect to the diagonal block score $A_{ii}$ through the triangular solver used in the forward pass. Recall that in the forward pass, we solve the lower-triangular system $(I + L_{ii})Y[i] = V_{\text{input}}[i]$ where $L_{ii} = \exp(S_{ii}) \odot M_{\text{stril}}$ and $S_{ii} = A_{ii}/\sqrt{d} - \sqrt{d}$.

The gradient computation applies the LUCID formula for the triangular solver: $\nabla_{P_{ii}} = -(\nabla_V[i] Y[i]^T) \odot M_{\text{stril}}$ (line 6), where the mask $M_{\text{stril}}$ restricts the gradient to the strictly lower-triangular part since the diagonal is implicitly unit-valued. We then apply the chain rule (lines 8-9) to propagate the gradient from $P_{ii} = \exp(S_{ii})$ back to the raw score $A_{ii}$, giving $\nabla_{A_{ii}} = \nabla_{P_{ii}} \odot \exp(S_{ii}) / \sqrt{d}$. This follows the same backward chain rule as the off-diagonal blocks, accounting for the exponential and scaling transformations.

\begin{algorithm}[ht]
    \caption{LUCID-PaTH Backward Pass: Computing $\nabla_W$ and $\nabla_{\boldsymbol{\beta}}$.}
    \label{alg:LUCID-PaTH_bwd_gradW_beta}
    \begin{algorithmic}[1]
        \STATE \textbf{Input:} $K_{RN} \in \mathbb{R}^{B \times H \times L \times D}, W \in \mathbb{R}^{B \times H \times L \times D}, \boldsymbol{\beta} \in \mathbb{R}^{B \times H \times L}, \nabla_V \in \mathbb{R}^{B \times H \times L \times D}, Y \in \mathbb{R}^{B \times H \times L \times D}, \sqrt{d}$
        \STATE \textbf{Output:} $\nabla_W \in \mathbb{R}^{B \times H \times L \times D}, \nabla_{\boldsymbol{\beta}} \in \mathbb{R}^{B \times H \times L}$
        \STATE \textcolor{gray}{// Note: $\nabla_V$ is obtained from Algorithm \ref{alg:LUCID-PaTH_bwd_gradV}}
        \STATE $\nabla_W \gets \mathbf{0}$
        \STATE $\nabla_{\boldsymbol{\beta}} \gets \mathbf{0}$
        \STATE $N_{blocks} \gets \text{num\_row\_blocks}$
        \STATE \textcolor{gray}{// Process all blocks to compute $\nabla_A$ where $A = \text{PaTH\_Score}(K_{RN}, K_{RN}, W, \boldsymbol{\beta})$}
        \FOR{$i = 0$ \textbf{to} $N_{blocks} - 1$}
            \STATE \textcolor{gray}{// Diagonal block: gradient through triangular solver (see Algorithm \ref{alg:TriangularSolveGrad})}
            \STATE $A_{ii} \gets \text{PaTH\_Score}(K_{RN}, K_{RN}, W, \boldsymbol{\beta}, i, i)$
            \STATE $\nabla_{A_{ii}} \gets \text{TriangularSolveGrad}(A_{ii}, \nabla_V[i], Y[i], \sqrt{d})$
            \STATE \textcolor{gray}{// Off-diagonal blocks: use LUCID formula $\nabla_P = -(\nabla_V Y^T) \odot M$}
            \FOR{$j = i - 1$ \textbf{downto} $0$}
                \STATE $A_{ij} \gets \text{PaTH\_Score}(K_{RN}, K_{RN}, W, \boldsymbol{\beta}, i, j)$
                \STATE $S_{ij} \gets A_{ij} / \sqrt{d} - \sqrt{d}$
                \STATE $\nabla_{P_{ij}} \gets -(\nabla_V[i] Y[j]^T)$
                \STATE \textcolor{gray}{// Chain rule: $P = \exp(S)$, $S = A/\sqrt{d} - \sqrt{d}$}
                \STATE $\nabla_{A_{ij}} \gets \nabla_{P_{ij}} \odot \exp(S_{ij}) / \sqrt{d}$
            \ENDFOR
        \ENDFOR
        \STATE \textcolor{gray}{// Use PaTH backward to compute $\nabla_W$ and $\nabla_{\boldsymbol{\beta}}$ from $\nabla_A$}
        \STATE $\nabla_W, \nabla_{\boldsymbol{\beta}} \gets \text{PaTH\_Backward\_W\_Beta}(\nabla_A, K_{RN}, W, \boldsymbol{\beta})$
        \STATE \textbf{return} $\nabla_W, \nabla_{\boldsymbol{\beta}}$
    \end{algorithmic}
\end{algorithm}

\textbf{Computing $\nabla_W$ and $\nabla_{\boldsymbol{\beta}}$ through PaTH Scores.} Algorithm~\ref{alg:LUCID-PaTH_bwd_gradW_beta} computes the gradients with respect to the PaTH-specific parameters $W$ and $\boldsymbol{\beta}$ using a structure similar to Algorithm~\ref{alg:LUCID-PaTH_bwd_gradK}. The algorithm first computes $\nabla_A$ by processing all blocks (lines 8-20), handling diagonal blocks through Algorithm~\ref{alg:TriangularSolveGrad} (lines 11) and off-diagonal blocks using the LUCID gradient formula (lines 13-19). The gradient computation for each block is identical to that in Algorithm~\ref{alg:LUCID-PaTH_bwd_gradK}, since both algorithms require $\nabla_A$ as an intermediate result.

The key difference lies in the final step (line 22), where we invoke $\text{PaTH\_Backward\_W\_Beta}$ to compute $\nabla_W$ and $\nabla_{\boldsymbol{\beta}}$ from $\nabla_A$. This routine handles the complex dependencies on $W$ and $\boldsymbol{\beta}$ in the PaTH score formula, particularly in the correction term $\text{tril}(K_{\text{RN}}W^\top)\left( I + \text{diag}(\boldsymbol{\beta})\text{stril}(WW^\top) \right)^{-1}\text{diag}(\boldsymbol{\beta})\text{stril}(WK_{\text{RN}}^\top)$. By abstracting these PaTH-specific computations, we maintain a clean separation between the LUCID preconditioning logic and the PaTH positional encoding mechanism.

\FloatBarrier

\subsection{Relation to DeltaNet}
\label{sec:deltanet}

As established in \cref{sec:gradient-descent-rkhs}, LUCID can be viewed as DeltaNet operating in the infinite-dimensional RKHS induced by the exponential kernel. Here we make this connection precise and highlight the key differences.

DeltaNet \citep{yang2024parallelizing} proposes a preconditioned linear attention mechanism:
\begin{align*}
    &\left( M \circ QK^{\top} \right) \left( I_N + \text{stril} \left( \operatorname{diag}(\boldsymbol{\beta}) K K^\top \right) \right)^{\!-1} \operatorname{diag}(\boldsymbol{\beta}) V,
\end{align*}
where $ \boldsymbol{\beta} = (\beta_1, \ldots, \beta_N) $ are learned per-token scaling parameters. Setting $ \boldsymbol{\beta} = \mathbf{1} $ and introducing the exponential kernel feature map $ \phi $ recovers the LUCID formulation.

\paragraph{Complexity Comparison.}
DeltaNet achieves $\mathcal{O}(Nd^2)$ time and $\mathcal{O}(d^2)$ memory via its recurrent formulation, making it suitable for very long sequences. LUCID, in contrast, retains $\mathcal{O}(N^2d)$ complexity like standard softmax attention. This is intentional: LUCID is designed as a drop-in enhancement for full attention layers, not a replacement for efficient attention. In hybrid architectures that combine efficient layers (e.g., sliding window, SSMs) with global attention layers, LUCID improves the precision of the global layers where retrieval quality is paramount.

\paragraph{Memory Update Capability.}
A key feature of DeltaNet is its ability to update values associated with keys---when the same key appears multiple times in a sequence, DeltaNet can overwrite the old value with the new one. This ``erase-then-write'' mechanism, derived from the delta rule, is essential for binding context-specific values to keys. LUCID inherits this capability through its derivation from the same delta rule in RKHS (\cref{sec:gradient-descent-rkhs}).

\paragraph{Key Difference: Feature Space Dimensionality.}
The fundamental distinction lies in the feature space:
\begin{itemize}
    \item \textbf{DeltaNet}: Uses the identity map $\phi(x) = x$, operating in the $d$-dimensional token space. The preconditioner $(I + \text{stril}(KK^\top))^{-1}$ decorrelates keys in this finite-dimensional space.
    \item \textbf{LUCID}: Uses the exponential kernel feature map $\phi: \mathbb{R}^d \to \mathcal{H}$, operating in an infinite-dimensional RKHS. The preconditioner $(M \circ \exp(KK^\top))^{-1}$ decorrelates keys in this richer space.
\end{itemize}

\paragraph{Why Infinite Dimensions Matter.}
In the exponential kernel RKHS, a crucial property holds: $\langle \phi(\rvk_i), \phi(\rvk_j) \rangle = \exp(\rvk_i^\top \rvk_j) > 0$ for all $i, j$. This means keys are \textit{never orthogonal} in the RKHS feature space, even when they are orthogonal in token space. Consequently, the LUCID preconditioner always provides non-trivial correction, whereas DeltaNet's correction vanishes when keys happen to be orthogonal in the $d$-dimensional space. This richer geometry allows LUCID to capture and remove interference patterns that DeltaNet cannot detect, leading to more precise retrieval. In our experiments, we did not find performance gains from learning~$ \boldsymbol{\beta} $.

\begin{takeaway}[ LUCID = DeltaNet in RKHS]
The key difference is dimensionality: DeltaNet operates in finite $d$-dimensional token space where keys can be orthogonal; LUCID operates in infinite-dimensional RKHS where $\exp(\rvk_i^\top \rvk_j) > 0$ always, enabling complete decorrelation.
\end{takeaway}

\subsection{Ablations}
\label{sec:ablations}
\paragraph{Reduced Head Size.} We next examine the effect of reducing the head dimension while keeping the total model dimension constant. Specifically, we fix the sequence length to 4096 from the experiment in \cref{sec:longcontext} and reduce the per-head size from 64 to 16, increasing the number of heads proportionally from 32 to 128.

Under this setup, LUCID attention achieves a 0.1 improvement in validation loss over the Transformer baseline for headsize 16. This confirms our intuition that LUCID is most effective when the attention representation is narrow relative to the sequence length.

\paragraph{RoPE.} Rotary Position Embeddings (RoPE) introduce relative position information by applying structured rotation matrices to the query and key vectors. We have found that the loss difference between LUCID ($2.42$) and standard attention ($2.54$) increases to $0.12 $ when the RoPE is removed in sequence length 32k setting (\cref{fig:lucid-context-comparison}). We conjecture that RoPE structures the keys and queries there by conditioning them for better retrieval. Absence of RoPE, alternatively called NoPE, is used in Llama 4 \citep{meta2025llama4}.

\begin{table}[t]
    \centering
    \caption{Performance comparison of different variants of LUCID Attention for MNIAH task at 2048 sequence length.}
    \label{tab:mniah_ablation_2K_pt}
    \begin{adjustbox}{max width=\columnwidth, center}
        \begin{tabular}{l *{5}{S[table-format=2.1, round-mode=none]}}
            \toprule
            \multicolumn{1}{c}{} & \multicolumn{5}{c}{\textbf{Number of Needles}} \\
            \cmidrule(lr){2-6}
            {\textbf{Variant}} & {\textbf{2}} & {\textbf{4}} & {\textbf{6}} & {\textbf{8}} & {\textbf{10}} \\
            \midrule
            \textbf{key norm ($\boldsymbol{\beta} = \mathbf{1}$)} & \textbf{76.6} & \textbf{55.8} & \textbf{43.6} & \textbf{33.8} & 25.2 \\
            \textbf{QK-Norm} & 75.4 & 51.4 & 37.2 & 32.0 & 20.4 \\
            \textbf{max norm} & 74.2 & 48.8 & 37.2 & 31.4 & 19.6 \\
            \textbf{key norm (learn $\boldsymbol{\beta}$)} & 73.8 & 48.0 & 36.8 & 31.8 & \textbf{26.8} \\
            \bottomrule
        \end{tabular}
    \end{adjustbox}
\end{table}

\paragraph{LUCID Variants.} We test different LUCID variants on MNIAH task for different number of needles for 2048 sequence length obtained by experimenting with $\exp$ stabilization. The results are summarized in \cref{tab:mniah_ablation_2K_pt}.

\paragraph{Increasing Pretraining Context Length}
\label{sec:longcontext}
We also conduct some log-perplexity based evaluations at larger scale by training a 16 layer model with model dimension 2048, feed-forward dimension 4096, vocabulary size 256,000, and $\beta_2=0.95$ for Adam. We trained the models on sequence lengths 4096 and 32,768. For the 4096-token setup, we train with a batch size of 1M tokens per iteration for 80k steps (80B tokens). For the 32K-token setting, we use 2M tokens per iteration for 25k steps (50B tokens). We evaluate how LUCID performs as sequence length increases while holding attention head size constant. At 4096 context length, LUCID improves training loss by 0.012 over the Transformer baseline after 80B tokens. At 32,768 context length, the gain increases to 0.048 under the same architecture and optimization settings. These results support our hypothesis: for a fixed representation size, the need to precondition grows with sequence length.

\begin{figure}[t]
  \centering
  \begin{minipage}[b]{0.48\columnwidth}
    \centering
    \includegraphics[width=\linewidth]{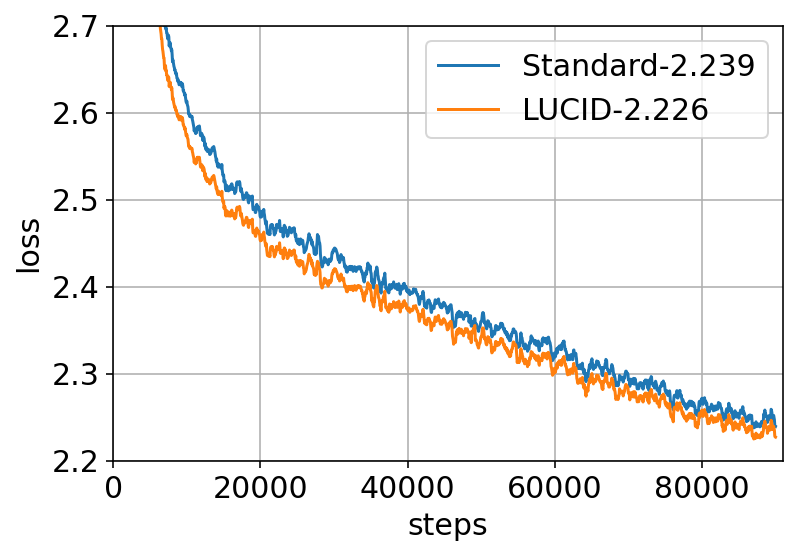}
  \end{minipage}
  \hfill
  \begin{minipage}[b]{0.48\columnwidth}
    \centering
    \includegraphics[width=\linewidth]{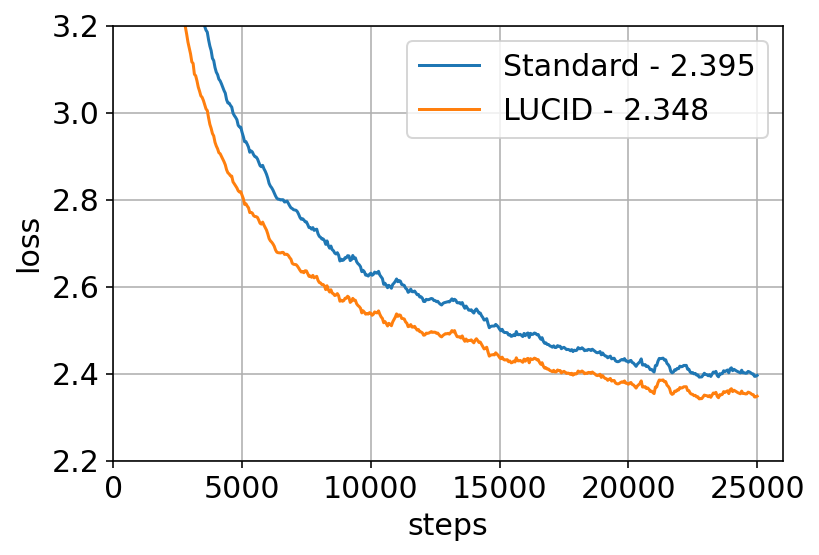}
  \end{minipage}
  \caption{Training loss vs. steps for Standard Attention vs. LUCID Attention at different context lengths. LUCID achieves lower validation loss at both 4096 (left) and 32768 (right) sequence lengths.}
  \label{fig:lucid-context-comparison}
\end{figure}

\begin{table}[t]
    \centering
    \caption{ Performance comparison of Standard Attention and LUCID Attention models for SNIAH and MNIAH tasks across different sequence lengths after fine-tuning.}
    \label{tab:niah_results_after_ft}
    \begin{adjustbox}{max width=\columnwidth, center}
        \begin{tabular}{l l *{7}{S[table-format=2.1, round-mode=none]}}
            \toprule
            \multicolumn{2}{c}{} & \multicolumn{7}{c}{\textbf{Sequence Length}} \\
            \cmidrule(lr){3-9}
            {\textbf{Task}} & {\textbf{Model}} & {\textbf{2K}} & {\textbf{4K}} & {\textbf{6K}} & {\textbf{8K}} & {\textbf{16K}} & {\textbf{32K}} & {\textbf{64K}} \\
            \midrule
            \multirow{2}{*}{\textbf{SNIAH}} & \textbf{Standard} & 55.9 & 47.5 & 32.5 & 19.3 & 2.0 & 0.0 & 2.0 \\
            & \textbf{LUCID} & 62.2 & 68.6 & 68.2 & 50.2 & 15.0 & 1.0 & 2.0 \\
            \midrule
            \multirow{2}{*}{\textbf{MNIAH}} & \textbf{Standard} & 33.6 & 37.4 & 26.0 & 23.2 & 4.0 & 4.0 & 2.0 \\
            & \textbf{LUCID} & 51.6 & 51.4 & 40.4 & 33.2 & 14.0 & 12.0 & 2.0 \\
            \bottomrule
        \end{tabular}
    \end{adjustbox}
\end{table}

\paragraph{Compute Ablation} A natural question is whether the 0-6\% training overhead of LUCID could be better utilized by simply training the baseline model for longer. To address this, we conducted an ablation study where we increased the training compute for Standard Attention by ${\sim}10\%$ additional steps during 64K sequence length continual pretraining, matching LUCID's total compute budget. As shown in \cref{tab:compute_ablation}, training the baseline longer does not yield equivalent performance gains---LUCID significantly outperforms the compute-matched baseline, particularly in the critical 8K--16K context window where standard attention begins to fail. This demonstrates that LUCID's improvements stem from its architectural design rather than merely additional compute.

\begin{table}[h]
    \centering
    \caption{Compute ablation: Standard Attention trained for +10\% additional steps vs.\ LUCID on needle-in-a-haystack tasks. Extra training compute does not close the performance gap.}
    \label{tab:compute_ablation}
    \begin{small}
    \begin{tabular}{llccc}
        \toprule
        \textbf{Task} & \textbf{Seq Len} & \textbf{Standard} & \textbf{Standard (+10\%)} & \textbf{LUCID} \\
        \midrule
        \multirow{3}{*}{SNIAH} & 4096 & 0.475 & 0.467 & \textbf{0.686} \\
        & 8192 & 0.193 & 0.193 & \textbf{0.502} \\
        & 16384 & 0.020 & 0.020 & \textbf{0.150} \\
        \midrule
        \multirow{3}{*}{MNIAH} & 8192 & 0.232 & 0.234 & \textbf{0.332} \\
        & 16384 & 0.040 & 0.060 & \textbf{0.140} \\
        & 32768 & 0.040 & 0.040 & \textbf{0.120} \\
        \bottomrule
    \end{tabular}%
    \end{small}
\end{table}

\subsection{LongBench and SCROLLS Task Details}
\label{sec:task-details}

We evaluate on six tasks spanning three categories:
\begin{itemize}
    \item \textbf{Multi-Document QA}: \textit{2WikiMQA}~\citep{ho2020constructing} requires multi-hop reasoning with up to 5-hop chains over Wikipedia passages. \textit{HotpotQA}~\citep{yang2018hotpotqa} tests 2-hop QA mixed with numerous distracting passages.
    \item \textbf{Single-Document QA}: \textit{MultifieldQA}~\citep{bai2023longbench} covers diverse domains including legal, government, and academic documents. \textit{Qasper}~\citep{dasigi2021dataset} (8K context) requires information-seeking QA over full-text NLP papers.
    \item \textbf{Summarization}: \textit{QMSum}~\citep{zhong2021qmsum} (32K context) evaluates query-based summarization of long meeting transcripts.
\end{itemize}

\paragraph{Finetuning Setup.}
To ensure rigorous evaluation, we adopted the following finetuning protocol:
\begin{itemize}
    \item \textbf{LongBench Tasks}: We finetuned on a combination of 50\% of the 2WikiMQA training data and 50\% of the MultifieldQA training data. We then evaluated on the remaining halves of 2WikiMQA and MultifieldQA, and the full HotpotQA dataset.
    \item \textbf{SCROLLS Tasks}: For QMSum and Qasper, we finetuned on their respective training splits and evaluated on their validation splits.
\end{itemize}

\end{document}